\def\eqref#1{equation~\ref{#1}}
\def\1{\bm{1}}
\def\kdpp{{$k\textnormal{-DPP}$}}
\DeclareMathAlphabet{\mathsfit}{\encodingdefault}{\sfdefault}{m}{sl}
\SetMathAlphabet{\mathsfit}{bold}{\encodingdefault}{\sfdefault}{bx}{n}
\DeclareMathOperator{\Tr}{Tr}
\theoremstyle{plain}
\newtheorem{theorem}{Theorem}[section]
\newtheorem{proposition}[theorem]{Proposition}
\newtheorem{lemma}[theorem]{Lemma}
\newtheorem{corollary}[theorem]{Corollary}
\theoremstyle{definition}
\newtheorem{definition}[theorem]{Definition}
\theoremstyle{remark}
\newcommand{\figwidthtwo}{0.48\textwidth}
\newcommand{\figwidththree}{0.32\textwidth}
\newcommand{\figwidthfour}{0.245\textwidth}
\newcommand{\algoname}{DNS}
\icmltitlerunning{\algoname: \textbf{\underline{D}}eterminantal Point Process Based \textbf{\underline{N}}eural Network \textbf{\underline{S}}ampler for Ensemble Reinforcement Learning}
\begin{document}

\twocolumn[
\icmltitle{\algoname: \textbf{\underline{D}}eterminantal Point Process Based \textbf{\underline{N}}eural Network \textbf{\underline{S}}ampler\\ for Ensemble Reinforcement Learning}



\icmlsetsymbol{equal}{*}

\begin{icmlauthorlist}
\icmlauthor{Hassam Ullah Sheikh}{equal,labs}
\icmlauthor{Kizza Nandyose Frisbee}{equal,hr}
\icmlauthor{Mariano Phielipp}{labs}
\end{icmlauthorlist}

\icmlaffiliation{labs}{Intel Labs}
\icmlaffiliation{hr}{Intel Corporation}

\icmlcorrespondingauthor{Hassam Sheikh}{hassam.sheikh@intel.com, hassamsheikh1@gmail.com}

\icmlkeywords{Compute-efficient Reinforcement Learning}

\vskip 0.3in
]



\printAffiliationsAndNotice{\icmlEqualContribution} 

\begin{abstract}
The application of an ensemble of neural networks is becoming an imminent tool for advancing state-of-the-art deep reinforcement learning algorithms. However, training these large numbers of neural networks in the ensemble has an exceedingly high computation cost which may become a hindrance in training large-scale systems. In this paper, we propose \algoname: a \textbf{\underline{D}}eterminantal Point Process based \textbf{\underline{N}}eural Network \textbf{\underline{S}}ampler that specifically uses \kdpp\ to sample a subset of neural networks for backpropagation at every training step thus significantly reducing the training time and computation cost. We integrated \algoname\ in REDQ for continuous control tasks and evaluated on MuJoCo environments. Our experiments show that \algoname\ augmented REDQ matches the baseline REDQ in terms of average cumulative reward and achieves this using less than 50\% computation when measured in FLOPS.

\end{abstract}

\section{Introduction}
In the past decade, reinforcement learning (RL) algorithms powered by high-capacity function approximators such as deep neural networks have been used to master complex sequential decision problems such as Atari games~\citep{Mnih-2015-Nature}, boards games like Chess, Go and Shogi~\citep{Silver-2016-Nature,Silver-2017-Nature,Silver-2018-Science} and robotic manipulation~\citep{Liu-2021-MDPI}. Despite having impressive results, deep reinforcement learning (DRL) algorithms suffer from whole host of problems such as sample inefficiency~\citep{Kaiser-2020-ICLR}, overestimation bias~\citep{Hasselt-2010-NIPS,Hasselt-2016-AAAI,Lan-2020-ICLR,Anschel-2017-ICML,fujimoto2018addressing} and imbalance between exploration and exploitation~\citep{lee2020sunrise,osband2016deep}.   

Considering the success of ensembles in supervised learning, use of ensemble of neural networks is becoming popular in deep reinforcement learning (DRL) and are being used to address the aforementioned issues. For example, in~\citep{Lan-2020-ICLR,Anschel-2017-ICML,fujimoto2018addressing} have used ensemble to address the overestimation bias problem. In~\citep{Chen-2021-ICLR} proposed REDQ that uses ensemble with high update-to-date ratio to address the sample inefficiency problem. Similarly~\citep{lee2020sunrise} have used ensemble for efficient exploration. 

Despite ensembles providing elegant theoretical and practical solutions, they introduce new practical problems such as high computation cost and long training times. The high computation cost problem is specially evident in actor-critic settings where DRL algorithms use a high number of critic networks. One such example is the REDQ algorithm that uses ten critic networks and updates all of them in every training step which leads to high computation cost as well as high wall-clock time.

To address this issue, we present \algoname: a \textbf{D}eterminantal Point Process based \textbf{N}eural Network \textbf{S}ampler that specifically uses \kdpp~\citep{Kulesza-2011-ICML} to sample a subset of neural networks for backpropagation at every training step.  \algoname\  uses Centered Kernel Alignment (CKA)~\citep{Kornblith-2019-ICML} values to form the similarity matrix which are then used by the \kdpp\ to sample the subset on neural networks for backpropagation. The motivation for sampling a subset of networks came from a hypothesis which we show in~\Cref{subsec:hypothesis} that the Q-values from the critics converge prematurely during training thus eliminating the need of training all the critics at every training step. 

Additionally, we show that in the event that the CKA matrix is not positive semi-definite, the closest positive semi-definite matrix is just a diagonal perturbation of the CKA matrix and its resulting kernel matrix is still Hermitian positive semi-definite.

We applied \algoname\ on REDQ and performed experiments on MuJoCo environments~\citep{Todorov-2012-IROS} and show that a simple sampling technique can significantly reduce the training time and computation while maintaining similar performance as training all the networks in the ensemble.

To summarize, our contributions are following:
\begin{enumerate}
    \item We empirically show that neural network based value-function approximators collapse prematurely during training in ensemble reinforcement learning.
    \item To address this issue, we propose a Determinantal Point Process based Neural Network Sampler that samples a subset of value-function approximators for backpropagation at every training step. 
    \item We apply \algoname\ on REDQ, that uses an ensemble of \textbf{ten} critic networks. Our experiments have shown that \algoname\ sampling achieves similar or better results than REDQ in $50\%$ computation when measured in FLOPS.
    \item We also provide a theoretical analysis and proof that shows that \kdpp\ sampling of action-value functions leads to lower action-value minimization variance than random sampling $k$ action-values. Additionally, we show how sufficiency conditions for \kdpp\ sampling can easily be met for the Deep RL use case.
\end{enumerate}





\section{Related Work}
\paragraph{Ensembles in Deep Reinforcement Learning:}
Application of ensemble of neural networks in deep reinforcement learning has been studied in several recent studies for different purposes. In~\citep{fujimoto2018addressing,Anschel-2017-ICML,Lan-2020-ICLR,Chen-2021-ICLR} have used an ensemble to address the overestimation bias in deep Q-learning based methods for both continuous and discrete control tasks.  Most recently proposed TOP~\cite{moskovitz-2021-NeurIPS} proposed a method which learns to balance optimistic and pessimistic value estimation online
Similarly, Bootstrapped DQN and extensions~\citep{osband2016deep,chen2017ucb} have used ensemble of neural networks for efficient exploration. In~\cite{Chen-2021-ICLR,mai2022sample} have used large number of ensembles to provide sample efficient reinforcement learning algorithms. Use of ensemble is rapidly growing in offline reinforcement learning to address issue such as error propagation and uncertainty estimations. The error propagation problem in offline reinforcement learning is addressed in~\citep{Kumar-2019-NeurIPS} using ensemble. In recently proposed methods such as~\cite{an2021uncertainty,ghasemipour2022why} have used large number of ensembles to measure uncertainty in offline RL setting. Application of ensemble is not only limited to the critics but several recent papers have used ensembles in the policy domain as well~\cite{lee2020sunrise,zhang2019ace}. 

\paragraph{Determinantal Point Process in Machine Learning:}
 Determinantal Point Processes (DPPs) have emerged as powerful models in the machine learning community in applications requiring information diversity, coverage, and reduced redundancy such as text summarization~\citep{Kulesza-2012-FTML}. Applications of DPPs include video summarization~\citep{Gong-2014-NIPS,sharghi-2016-ECCV}, pose estimation~\citep{Gupta} and  wardrobes creation~\citep{Hsaio-2018-CVPR}. More recently DPPs have been used in reinforcement learning  to promote behavior diversity~\citep{pmlr-v139-lupu21a}. $k$-DPPs~\cite{Kulesza-2011-ICML}, an extension of DPP have been adopted in many applications such as image search and stochastic gradient descent using diversified fixed size mini-batches~\citep{Zhang-2017-AUI}.

\section{Background}
\paragraph{REDQ:} REDQ~\citep{Chen-2021-ICLR} is an off-policy actor-critic method based on max-min RL framework. REDQ uses an ensemble of neural networks to model the critic. One key feature of REDQ is in-target minimization that samples a subset of neural networks to create the target value to train the critics networks. The target value $y$ is calculated as 

\[
y=r+\gamma\left(\min _{i\in \cal{M}} Q_{\phi_{\text {targ},i}}\left(s^{\prime}, \tilde{a}^{\prime}\right)-\alpha \log \pi_{\theta}\left(\tilde{a}^{\prime} \mid s^{\prime}\right)\right),
\] 
where $\cal{M}$ is the number of target networks. The policy gradient is written as 

\[
\nabla_{\theta} \frac{1}{|B|} \sum_{s \in B}\left(\frac{1}{N}\sum_{i =1}^{N} Q_{\phi_{i}}\left(s, \tilde{a}_{\theta}(s)\right)-\alpha \log \pi_{\theta}\left(\tilde{a}_{\theta}(s) | s\right)\right),
\]
where $N$ is the number of critic networks.

\begin{figure*}[h!]
\begin{center}
    \subfloat[Ant-v2]{
\includegraphics[width=\figwidththree]{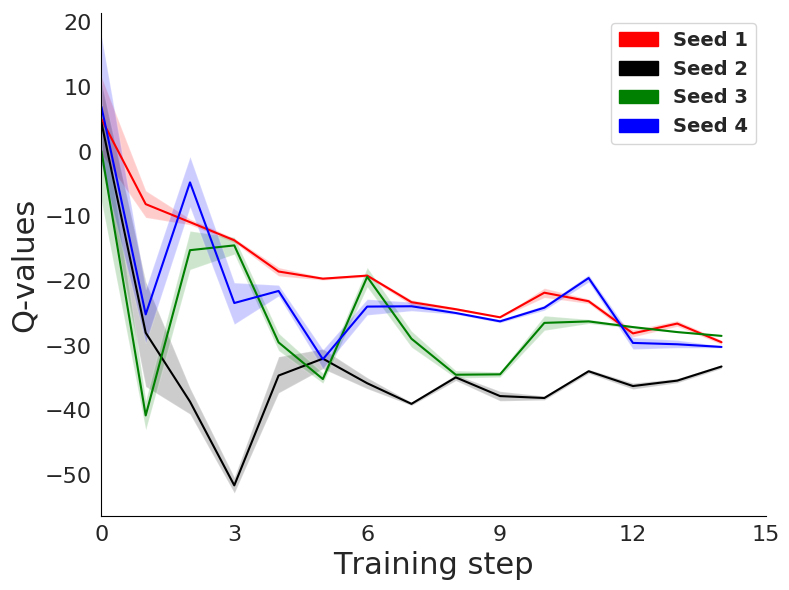}}
    \subfloat[HalfCheetah-v2]{
\includegraphics[width=\figwidththree]{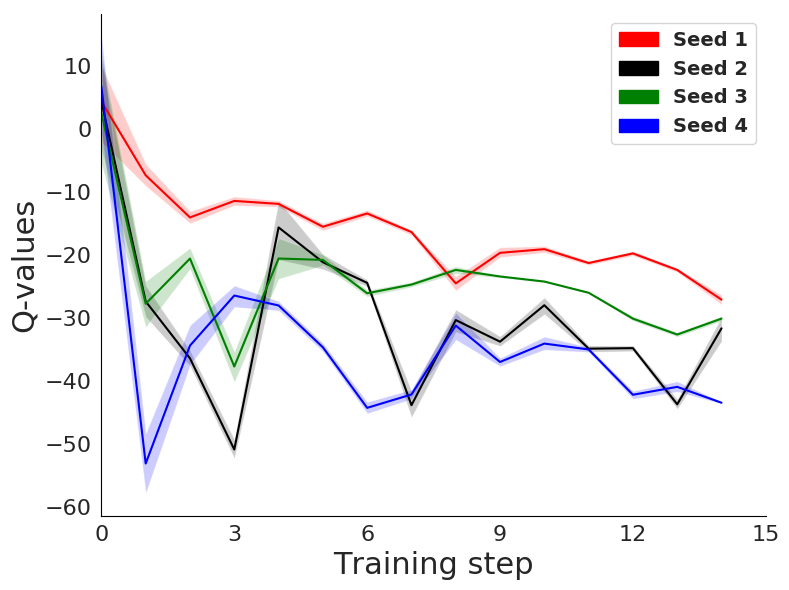}}
    \subfloat[Walker-v2]{
\includegraphics[width=\figwidththree]{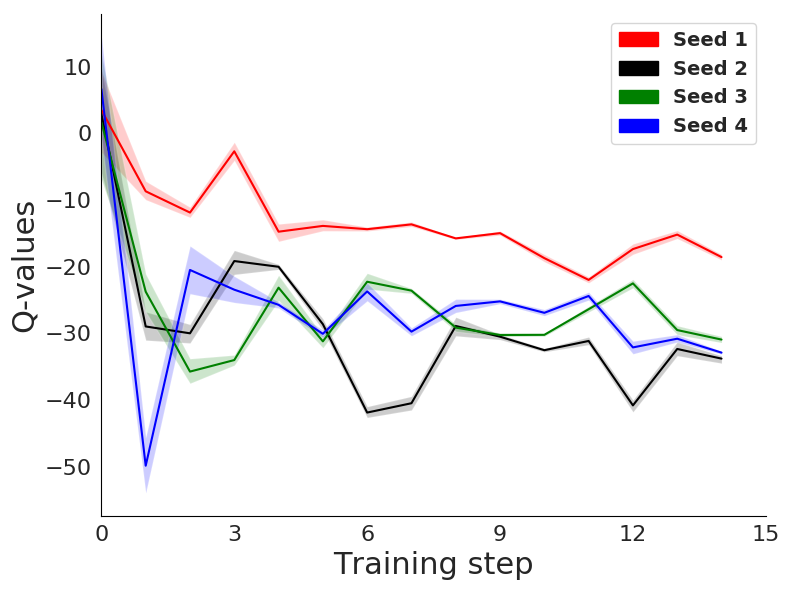}}
\end{center}

\caption{\label{fig:early_q_values} Q-value plots of three MuJoCo environments accumulated over four different seeds. Each curve in the plot represents the mean of the Q-values from ten critics and shaded area represents 95\% confidence interval band. Notice that each curve has a variance in the Q-values in the beginning of the training but it quickly disappears as training continues.}
\end{figure*}

\paragraph{Centered Kernel Alignment:}
Centered Kernel Alignment (CKA)~\citep{Cristianini-2002-NEURIPS,Cortes-2012-JMLR,Kornblith-2019-ICML} is an invertible linear transformation invariant statistic for measuring meaningful multivariate similarity between representations of higher dimension. CKA is a normalized form of Hilbert-Schmidt Independence Criterion (HSIC)~\citep{Gretton-2005-ALT}. Formally, CKA is defined as:
 
 Let $\boldsymbol{X} \in \mathbb{R}^{n\times p_1}$ denote a matrix of activations of $p_1$ neurons for $n$ examples and $\boldsymbol{Y} \in \mathbb{R}^{n\times p_2}$ denote a matrix of activations of $p_2$ neurons for the same $n$ examples. Furthermore, we consider $\boldsymbol{A}_{ij}=a\left(x_i, x_j\right)$ and $\boldsymbol{B}_{ij}=b\left(y_i, y_j\right)$ where $k$ and $l$ are two kernels.

\[
\text{CKA}\left(\boldsymbol{A}, \boldsymbol{B}\right) = \cfrac{\text{HSIC}\left(\boldsymbol{A}, \boldsymbol{B}\right)}{\sqrt{\text{HSIC}\left(\boldsymbol{A}, \boldsymbol{A}\right)\cdot \text{HSIC}\left(\boldsymbol{B}, \boldsymbol{B}\right)}} 
\]
HSIC is a test statistic for determining whether two sets of variables are independent. The empirical estimator of HSIC is defined as:
\[
\text{HSIC}\left(\boldsymbol{A}, \boldsymbol{B}\right) = \cfrac{1}{(n-1)^2} \Tr\left(\boldsymbol{AHBH}\right)
\]
where $H$ is the centering matrix $H_n = I_n - \cfrac{1}{n}\mathbf{11}^T$.

\textbf{Determinantal Point Processes}:
A Determinantal point process (DPP)~\citep{Macchi-1975-AAP} is a random point process useful for the combinatorial problem of selecting a diverse sample from a set.  In particular, a DPP for a given finite set defines a probability distribution over subsets, where subsets containing diverse items have high probability and are thus more likely to be selected. We briefly discuss finite determinantal point processes here, for in-depth discussions refer~\citep{Hough-2006-PS,Kulesza-2012-FTML,Li-2016-ICML,Derezinski-2019-NeurIPS}.


\begin{definition}
A point process $\boldsymbol{X}$ on discrete set $\mathcal{S}$ and with Hermitian positive semi-definite marginal kernel $\boldsymbol{K}$:$\mathcal{S}^2\rightarrow\mathbb{C}$, $\boldsymbol{K}\preceq 1$ (all eigenvalues of $\boldsymbol{K}$ are at most 1) is called \textit{determinantal} iff 
\begin{equation}
P(\boldsymbol{X}\supseteq(x_i,\ldots,x_n))=det(\boldsymbol{K}(x_i,x_j))_{1\leq i,j\leq n}
\end{equation}
for any $n\in\mathbb{Z}^{+}$ and any $x_i,\ldots,x_n$ $\in\mathcal{S}$ or equivalently, iff
\begin{equation}
\label{eqn:K}
P(\boldsymbol{X}\supseteq x)=det(\boldsymbol{K}_x)
\end{equation}
for any $x\subset\mathcal{S}$, where $\boldsymbol{K}_x$ is the submatrix of $\boldsymbol{K}$ indexed by $x\times x$.
\end{definition}
Consequently, DPPs are a repulsive distribution over set $\mathcal{S}$, generating subsets that exhibit diversity.

Furthermore, for the case when $\boldsymbol{I}-\boldsymbol{K}$ invertible, the DPP $\boldsymbol{X}$ is called an $L$-ensemble with kernel $\boldsymbol{K}:=\boldsymbol{I}-(I+\boldsymbol{L})^{-1}$ and distribution
\begin{equation}
\label{eqn:l-ensemble}
P(\boldsymbol{X}=x)=det(\boldsymbol{L}_x)det(\boldsymbol{I}+\boldsymbol{L})^{-1}
\end{equation}
for any $x\subset\mathcal{S}$, where $\boldsymbol{L}_x$ is the sub-matrix of $\boldsymbol{L}$ indexed by $x\times x$.

\renewcommand\thetheorem{1}
\begin{lemma}\citep{Collings-1983-TAS}
Let $D$ be an $N\times N$ diagonal matrix and let $M$ be an arbitrary $N\times N$ matrix. The determinant of $(D + M)$ is :
\begin{equation}
det(\boldsymbol{D}+\boldsymbol{M}) = \sum_{S\subseteq\mathcal{S}}det(\boldsymbol{D}_S)det(\boldsymbol{M}_S).
\end{equation}
\end{lemma}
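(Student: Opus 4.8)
The plan is to prove the identity by exploiting the multilinearity of the determinant as a function of the columns of $\boldsymbol{D}+\boldsymbol{M}$, using the diagonal entries of $\boldsymbol{D}$ as the bookkeeping device that selects subsets. Write $\boldsymbol{D}=\operatorname{diag}(d_1,\dots,d_N)$, let $\boldsymbol{e}_j$ be the $j$-th standard basis vector, and let $\boldsymbol{m}_j$ denote the $j$-th column of $\boldsymbol{M}$. Then the $j$-th column of $\boldsymbol{D}+\boldsymbol{M}$ is exactly $d_j\boldsymbol{e}_j+\boldsymbol{m}_j$, so $\det(\boldsymbol{D}+\boldsymbol{M})$ is affine (degree at most one) in each $d_j$ separately; equivalently, one may regard the $d_j$ as formal indeterminates and read off the coefficient of each monomial $\prod_{j\in S}d_j$.

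First I would expand $\det(\boldsymbol{D}+\boldsymbol{M})=\det\!\big(d_1\boldsymbol{e}_1+\boldsymbol{m}_1,\ \dots,\ d_N\boldsymbol{e}_N+\boldsymbol{m}_N\big)$ by multilinearity in the $N$ columns. Distributing over every column produces one term for each way of choosing, in column $j$, either the summand $d_j\boldsymbol{e}_j$ or the summand $\boldsymbol{m}_j$. Indexing these choices by the set $S\subseteq\mathcal{S}$ of columns in which the diagonal summand $d_j\boldsymbol{e}_j$ is taken, we obtain $\det(\boldsymbol{D}+\boldsymbol{M})=\sum_{S\subseteq\mathcal{S}}\det(\boldsymbol{C}_S)$, where $\boldsymbol{C}_S$ is the matrix whose $j$-th column equals $d_j\boldsymbol{e}_j$ for $j\in S$ and $\boldsymbol{m}_j$ for $j\notin S$.

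The core step is to evaluate each $\det(\boldsymbol{C}_S)$. For $j\in S$ the $j$-th column of $\boldsymbol{C}_S$ is zero except for the entry $d_j$ in row $j$; writing $\det(\boldsymbol{C}_S)=\sum_{\sigma}\sign(\sigma)\prod_{j}(\boldsymbol{C}_S)_{\sigma(j),j}$, any permutation $\sigma$ with a nonzero contribution must satisfy $\sigma(j)=j$ for all $j\in S$. Such $\sigma$ restrict to permutations of the complement $\bar{S}=\mathcal{S}\setminus S$, their sign equals the sign of the restriction, and the product factors as $\big(\prod_{j\in S}d_j\big)\big(\prod_{j\in\bar S}\boldsymbol{M}_{\sigma(j),j}\big)$. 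Summing over these $\sigma$ gives $\det(\boldsymbol{C}_S)=\big(\prod_{j\in S}d_j\big)\det(\boldsymbol{M}_{\bar S})=\det(\boldsymbol{D}_S)\det(\boldsymbol{M}_{\bar S})$, since $\boldsymbol{D}$ is diagonal. Substituting back yields $\det(\boldsymbol{D}+\boldsymbol{M})=\sum_{S\subseteq\mathcal{S}}\det(\boldsymbol{D}_S)\det(\boldsymbol{M}_{\bar S})$, which is the content of the lemma, with the submatrix of $\boldsymbol{M}$ taken on the complementary index set $\bar S$ (summing over all $S\subseteq\mathcal{S}$ makes the pairing between the $\boldsymbol{D}$- and $\boldsymbol{M}$-minors complementary, so $\boldsymbol{M}_S$ in the statement is to be read as this complementary principal minor).

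I expect the only delicate point to be the sign bookkeeping in the last step: one must verify that forcing $\sigma$ to fix $S$ pointwise leaves no residual sign, so that the surviving sum is genuinely $+\det(\boldsymbol{M}_{\bar S})$ and not a signed minor. This is where a careful argument is needed—either the permutation factorization above, or an equivalent repeated Laplace (cofactor) expansion along the columns indexed by $S$, where each such column has a single nonzero entry on the diagonal and hence contributes a cofactor with trivial sign. Everything else is routine bookkeeping over subsets.
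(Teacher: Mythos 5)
Your proof is correct. Note first that the paper does not actually prove this lemma---it is quoted from Collings (1983) without argument---so there is no in-paper derivation to compare against; your route (multilinearity of the determinant in the columns, followed by a Leibniz-formula evaluation of each of the $2^N$ mixed determinants) is the standard one and is carried out soundly, including the one genuinely delicate step: a permutation that fixes $S$ pointwise has the same sign as its restriction to $\bar S$, so the surviving sum is $+\det(\boldsymbol{M}_{\bar S})$ with no residual sign, exactly as you argue. You also caught a real defect in the statement as printed: with both minors indexed by the \emph{same} set $S$ the identity is false (already for $N=1$ it would assert $d+m=1+dm$); the correct pairing is complementary, $\det(\boldsymbol{D}+\boldsymbol{M})=\sum_{S\subseteq\mathcal{S}}\det(\boldsymbol{D}_S)\det(\boldsymbol{M}_{\bar S})$, which is what you derive and what the paper actually needs---taking $\boldsymbol{D}=\boldsymbol{I}$ gives $\det(\boldsymbol{I}+\boldsymbol{L})=\sum_{x\subseteq\mathcal{S}}\det(\boldsymbol{L}_x)$, the $L$-ensemble normalization invoked immediately after the lemma. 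Your closing remark that $\boldsymbol{M}_S$ in the statement "is to be read as the complementary principal minor" is the right reading, and flagging it explicitly would improve the paper.
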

Thus (\ref{eqn:l-ensemble}) can be re-written in normalized form as:
\begin{equation}
\label{eqn:l-ensemble2}
P(\boldsymbol{X}=x)=det(\boldsymbol{L}_x)(\sum_{x\subseteq\mathcal{S}}det(\boldsymbol{L}_x))^{-1}.
\end{equation}
In this paper, we only considers DPPs that are $L$-ensembles because of their advantages such as:
\begin{enumerate}[i.]
\item While (\ref{eqn:K}) gives the probability that a set is contained in the DPP, (\ref{eqn:l-ensemble}) gives the exact probability that a sampled set is from the DPP and is thus more relevant for set selection tasks requiring samples from different regions in a feature space. From (\ref{eqn:l-ensemble}), more diverse sets have higher probability and are thus more likely to be selected. 
\item There is no requirement that all eigenvalues of $\boldsymbol{L}$ are less than or equal to 1.
\end{enumerate}
Since standard DPP sampling does not provide the flexibility of sampling a pre-specified size, in this work we focus on $k$-Determinantal Point Processes ($k$-DPPs). A $k$-DPP on discrete set $\mathcal{S}$ is a distribution over all subsets of $\mathcal{S}$ with cardinality $k$ and is thus a conditioning of a standard DPP on the event that a subset $X$ of $\mathcal{S}$ has a fixed size. A $k$-DPP thus gives probabilities
\begin{equation}
P^{\boldsymbol{(k)}}(\boldsymbol{X}=x)=det(L_x)(\sum_{\substack{x
\subseteq\mathcal{S}\\|x|=k}}det(\boldsymbol{L}_x))^{-1},
\end{equation}
where $|x|=k$ and $\boldsymbol{L}$ is a positive semi-definite kernel.\citep{Kulesza-2011-ICML}
Because $k$-DPPs only model contents of a set, they are less costly than standard DPPs and are useful in situations where sample size is constrained,
for example by empirical bounds or hardware restrictions \citep{Zhang-2017-AUI}.
\begin{figure*}
\begin{center}
    \subfloat[Ant-v2]{
\includegraphics[width=\figwidththree]{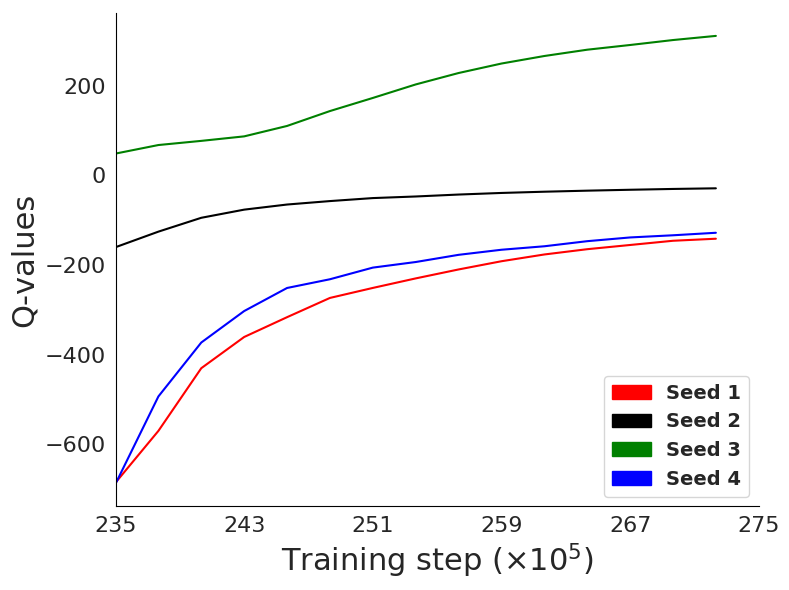}}
    \subfloat[HalfCheetah-v2]{
\includegraphics[width=\figwidththree]{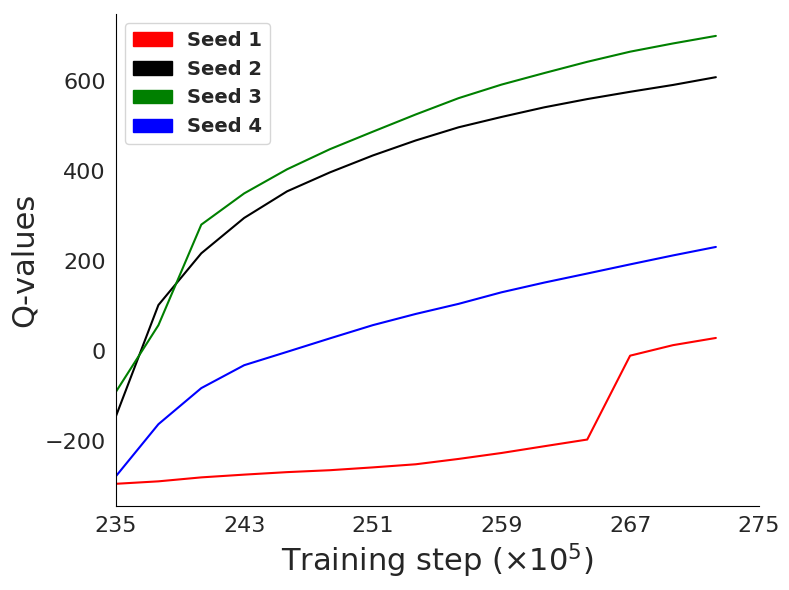}}
    \subfloat[Walker-v2]{
\includegraphics[width=\figwidththree]{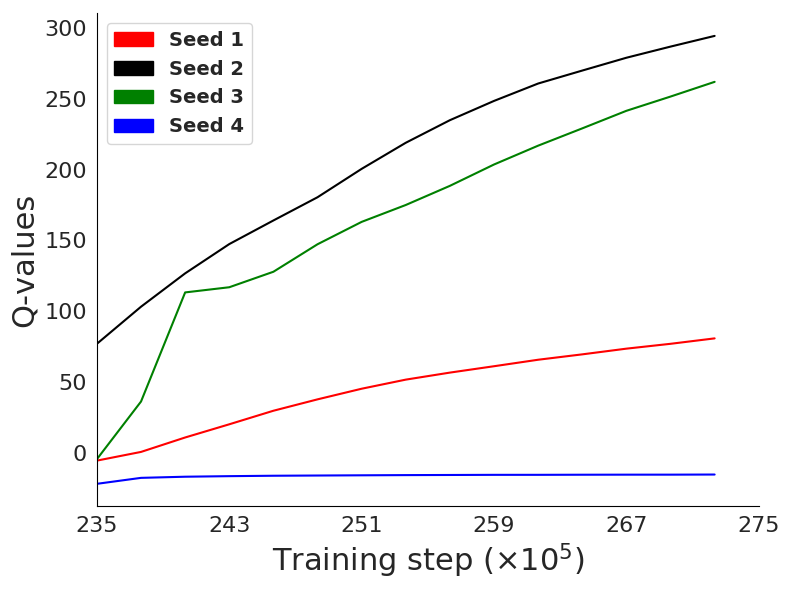}}
\end{center}

\caption{\label{fig:q_values_end} Q-value plots of three MuJoCo environments accumulated over four different seeds at the tail end of the training. Notice that the Q-values from the critics have completely collapsed and has zero variance.}
\end{figure*}

\section{\algoname: Determinantal Point Process Based Neural Network Sampler}
In this section, we propose \algoname: \textbf{D}eterminantal Point Process Based \textbf{N}eural Network \textbf{S}ampler that samples a subset of critic networks for backpropagation at training time. In principal \algoname\ can be used with any off-policy algorithm that uses the same target value to train the critics such as REDQ~\cite{Chen-2021-ICLR}, TOP~\citep{moskovitz-2021-NeurIPS} MaxminDQN~\cite{Lan-2020-ICLR}, EnsembleDQN~\cite{Anschel-2017-ICML}. For the exposition, we describe only the REDQ version in this paper.

This section is organized as follows: 
\begin{enumerate}
    \item We {\em empirically} show that the Q-values from the critic networks collapse prematurely during training time.
    \item we present the \kdpp\ based sampling algorithm to sample the indices for critic networks to train.  
\end{enumerate}

\subsection{ Empirical Evidence of Early Collapse of Q-values}
\label{subsec:hypothesis}
The work on this paper starts with a conjecture that the Q-values from the critic networks collapse prematurely. To verify our hypothesis, we trained REDQ on HalfCheetah-v2, Ant-v2 and Walker2d-v2 on four different seeds and measured the Q-values from all the ten critics. As shown in~\Cref{fig:early_q_values},  it took around fifteen training steps for all the ten critics from having distinct  Q-values to collapse to almost identical values in nearly every run for all three environments. Note that each curve in the plot represents the mean value of all ten critics and the shaded area around the curve represents 95\% confidence interval. 

A counter argument can be made here that in~\Cref{fig:early_q_values} we did not allow enough training steps that might induce any variance in the Q-values. To address this, we measured the Q-values at the tail end of the training. As shown in~\Cref{fig:q_values_end}, the Q-values have completely collapsed in all the runs across all three environments.  From this evidence, we can conclude that longer training reduces the variance in the Q-values.

\subsection{Compute Efficient Neural Network Sampling}
The motivation behind the idea of training a subset of critic networks came from the observation in~\Cref{subsec:hypothesis} that if all the critic networks collapse early in the training, we can only train a subset of critic networks at every training step. This will allow us to force diversity in the Q-values which recently has been shown to be a key component in ensemble reinforcement learning~\citep{sheikh2022maximizing}. To sample a diverse set of critics, we use \kdpp~\citep{Kulesza-2011-ICML} which is a derivative of DPPs~\cite{Macchi-1975-AAP}. The advantage of \kdpp\ over DPP is that \kdpp\ allows us to have a control on the size of the sampled neural networks while standard DPP automatically selects the size of the subset. One key component required for using the \kdpp\ is a similarity matrix. Since we are interested in sampling critic networks with diverse Q-values, we created the similarity matrix by measuring the pairwise CKA similarity of all the Q-values. Formally, the similarity matrix $\boldsymbol{L}\in\mathbb{R}^{^{N\times N}}$ is defined as:
\begin{equation}
\label{eq:sim_matrix}
\boldsymbol{L}=\text{CKA}((Q_{\phi_i}(s, a), Q_{\phi_j}(s, a))_{1 \leq i,j\leq N}.    
\end{equation}
The similarity matrix $\boldsymbol{L}$ is used by the \kdpp\ to sample the indices of the diverse critics to train.

\begin{figure*}[t]
\begin{center}
    \subfloat[Ant-v2]{
\includegraphics[width=\figwidthfour]{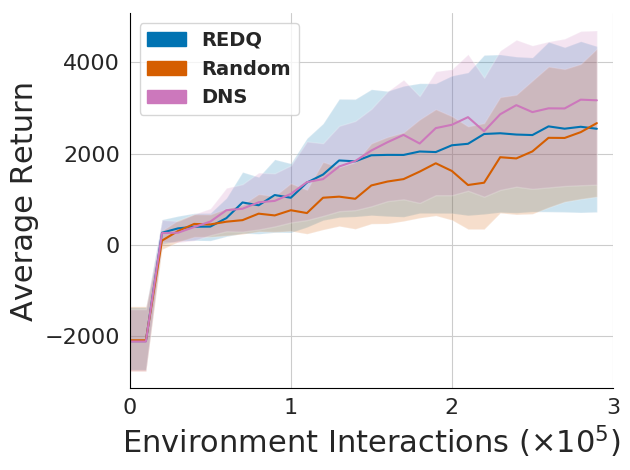}}\hspace{-6pt}
     \subfloat[HalfCheetah-v2]{
\includegraphics[width=\figwidthfour]{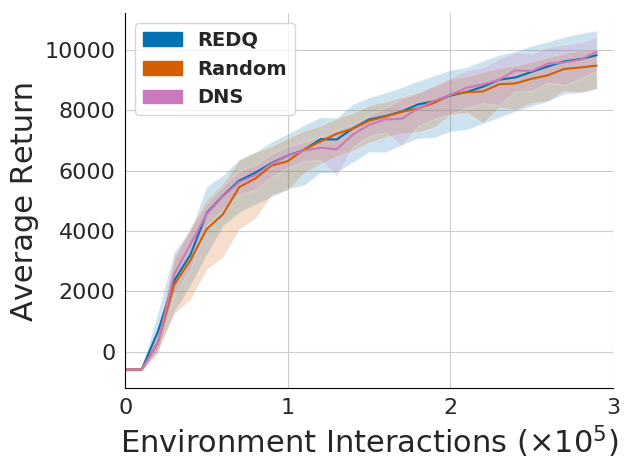}}\hspace{-10pt}
     \subfloat[Hopper-v2]{
\includegraphics[width=\figwidthfour]{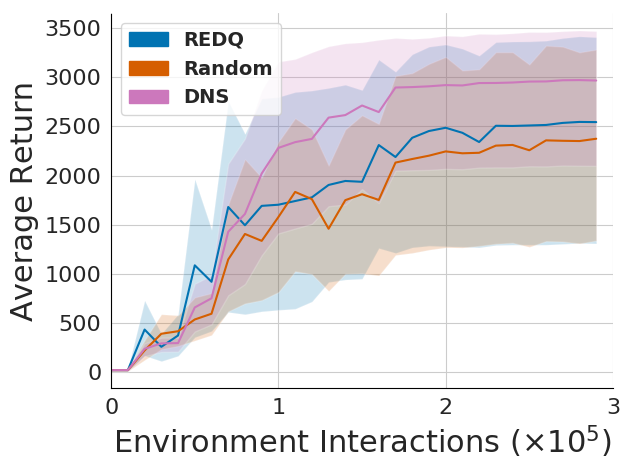}}\hspace{-10pt}
\subfloat[Walker2d-v2]{
\includegraphics[width=\figwidthfour]{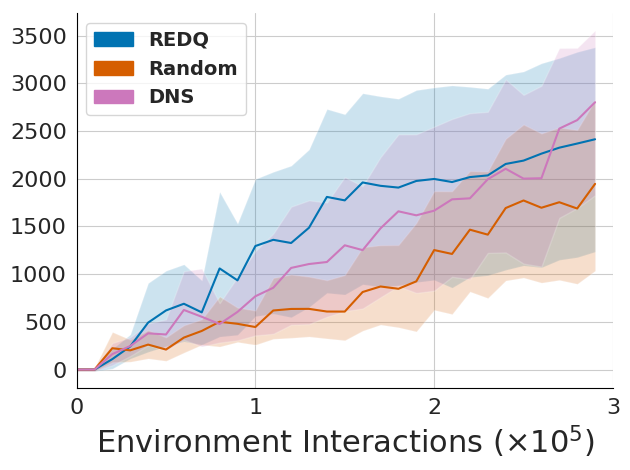}}\hspace{-10pt}
\end{center}
\caption{\label{fig:main_results}Training curves with 95\% confidence interval of baseline REDQ, random sampling and DNS.}
\end{figure*}

\begin{table*}[h]
\small
\centering
\caption{Max average return for 10 runs of 300K time steps. Maximum value for each task is bolded. $\pm$ corresponds to a single
standard deviation over runs}
\label{table:main_results}
\vskip 0.15in
\setlength{\tabcolsep}{4pt}
\renewcommand{\arraystretch}{1.5}
{
\begin{tabular}{lccccc}
\toprule
\bf{Environment} & \bf{Baseline} & \bf{Random} & \bf{DNS}  \\
\midrule
Ant-v2 	& 2543.1 $\pm$ 2595.7  & 2666.8 $\pm$ 2262.6  & \textbf{3167.2 $\pm$ 2484.7} \\
HalfCheetah-v2 & 9818.8 $\pm$ 1445.2 & 9474.3 $\pm$ 991.1 & \textbf{9931.0 $\pm$ 819.1} \\
Hopper-v2 & 2544.2 $\pm$ 1468.21 & 2374.9 $\pm$ 1405.8 & \textbf{2967.8 $\pm$ 1128.9} \\
Walker2d-v2 & 2414.4 $\pm$ 1580.0 & 1946.4 $\pm$ 1287.9 & \textbf{2802.3 $\pm$ 1272.1}\\
\bottomrule
\end{tabular}
}
\vskip -0.1in
\end{table*}
Formally, we consider a REDQ agent with $N$ number of critic networks parameterized by $\{\phi_{i}\}_{i=1}^{N}$. At every training step, we sample a batch of experience $B$ from replay buffer $\mathcal{D}$ . Using state-action $(s,a) \in B$, we fetch the Q-values $Q_{\phi_{i}}(s, a) \text{ for } i =1, 2, \ldots, N$. Using the Q-values, we create the similarity matrix $\boldsymbol{L}$ by measuring the pairwise CKA similarity using~\Cref{eq:sim_matrix}. The similarity matrix $\boldsymbol{L}$ is then used by \kdpp\ to sample a diverse set of critic networks of size $k$ to train. The rest of the training process is identical to the baseline REDQ which we invite the readers to see in~\Cref{alg:DNS}. {\em Note that the output of the \kdpp\ is indices of the critic networks}.

\subsection{Formal Theoretical Analysis}

In \algoname\,we utilize CKA values as entries of the similarity matrix  $\boldsymbol{L}\in\mathbb{R}^{^{N\times N}}$:
\[
\boldsymbol{L}=\text{CKA}((Q_{\phi_i}(s, a), Q_{\phi_j}(s, a))_{1 \leq i,j\leq N}.
 \]
Since not all similarity matrices are positive semi-definite, $\boldsymbol{L}$ can be approximated with the closest positive semi-definite matrix such that the relative similarity strengths among point pairs are preserved.
\begin{proposition}
The nearest positive semi-definite matrix to a symmetric matrix to
$\boldsymbol{L}\in\mathbb{R}^{^{N\times N}}$ is a diagonal perturbation of $\boldsymbol{L}$:
\[
\overset{\sim}{\boldsymbol{L}}=\boldsymbol{L}+\boldsymbol{D},
\]
where $\boldsymbol{D}=diag((\lambda_i+|\lambda_i|)/2)$, and
$\lambda_i, i\in\{1,\ldots,N\}$ are eigenvalues from the spectral
decomposition of $\boldsymbol{L}$.

\end{proposition}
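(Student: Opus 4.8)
The plan is to recognize this as the classical nearest positive semi-definite projection in the Frobenius norm (Higham's theorem) and to reduce it to a scalar, eigenvalue-wise problem using orthogonal invariance of that norm. Since $\boldsymbol{L}$ is real symmetric, I would start from its spectral decomposition $\boldsymbol{L}=\boldsymbol{U}\boldsymbol{\Lambda}\boldsymbol{U}^{T}$ with $\boldsymbol{U}$ orthogonal and $\boldsymbol{\Lambda}=\text{diag}(\lambda_1,\dots,\lambda_N)$ real, and pose the problem of minimizing $\|\boldsymbol{X}-\boldsymbol{L}\|_F$ over all $\boldsymbol{X}\succeq 0$. Because $\|\boldsymbol{A}\|_F^2=\Tr(\boldsymbol{A}^{T}\boldsymbol{A})$ is invariant under the orthogonal conjugation $\boldsymbol{X}\mapsto\boldsymbol{Y}:=\boldsymbol{U}^{T}\boldsymbol{X}\boldsymbol{U}$, and because $\boldsymbol{X}\succeq 0\iff\boldsymbol{Y}\succeq 0$, the problem becomes: minimize $\|\boldsymbol{Y}-\boldsymbol{\Lambda}\|_F$ over PSD $\boldsymbol{Y}$. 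This is the key simplification, replacing the full-matrix problem by one whose target is diagonal.

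Next I would expand $\|\boldsymbol{Y}-\boldsymbol{\Lambda}\|_F^2=\sum_i (Y_{ii}-\lambda_i)^2+\sum_{i\ne j}Y_{ij}^2\ge\sum_i (Y_{ii}-\lambda_i)^2$, so the off-diagonal entries only increase the objective and the optimum must be diagonal. The only fact I need from PSD-ness is that the diagonal entries satisfy $Y_{ii}=\boldsymbol{e}_i^{T}\boldsymbol{Y}\boldsymbol{e}_i\ge 0$; minimizing each scalar term $(Y_{ii}-\lambda_i)^2$ over $Y_{ii}\ge 0$ gives $Y_{ii}=\max(\lambda_i,0)=(\lambda_i+|\lambda_i|)/2$. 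The resulting $\boldsymbol{Y}=\boldsymbol{\Lambda}_+:=\text{diag}((\lambda_i+|\lambda_i|)/2)$ is itself PSD, so it attains the lower bound and is the unique minimizer. Transforming back yields $\overset{\sim}{\boldsymbol{L}}=\boldsymbol{U}\boldsymbol{\Lambda}_+\boldsymbol{U}^{T}$, i.e. $\boldsymbol{L}$ with every negative eigenvalue clamped to zero; equivalently $\overset{\sim}{\boldsymbol{L}}=\tfrac12(\boldsymbol{L}+|\boldsymbol{L}|)$ with $|\boldsymbol{L}|=\boldsymbol{U}|\boldsymbol{\Lambda}|\boldsymbol{U}^{T}$, which makes the appearance of the quantities $(\lambda_i+|\lambda_i|)/2$ transparent.

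I would then identify the perturbation explicitly. Writing $\overset{\sim}{\boldsymbol{L}}-\boldsymbol{L}=\boldsymbol{U}(\boldsymbol{\Lambda}_+-\boldsymbol{\Lambda})\boldsymbol{U}^{T}$ and noting that $\boldsymbol{\Lambda}_+-\boldsymbol{\Lambda}$ is diagonal, the correction is diagonal in the eigenbasis of $\boldsymbol{L}$ — precisely the sense in which $\overset{\sim}{\boldsymbol{L}}$ is a ``diagonal perturbation,'' with $\boldsymbol{D}$ the diagonal matrix (in the spectral basis) carrying the clamped eigenvalues. Positive semi-definiteness of $\overset{\sim}{\boldsymbol{L}}$ is then immediate from $\boldsymbol{\Lambda}_+\succeq 0$, and the Hermitian PSD-ness needed downstream follows because orthogonal conjugation preserves the spectrum.

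The step requiring the most care is not the optimization but the notational bookkeeping around the statement itself: one must be explicit that ``diagonal'' is meant relative to the spectral basis, since the correction $\boldsymbol{U}(\boldsymbol{\Lambda}_+-\boldsymbol{\Lambda})\boldsymbol{U}^{T}$ is generally dense in the standard coordinates, and that the entries $(\lambda_i+|\lambda_i|)/2$ are the \emph{corrected eigenvalues substituted for} $\boldsymbol{\Lambda}$ rather than a literal additive shift of $\boldsymbol{L}$ in the original coordinates. Pinning down this interpretation — equivalently, routing everything through the clean identity $\overset{\sim}{\boldsymbol{L}}=\tfrac12(\boldsymbol{L}+|\boldsymbol{L}|)$ — is where I would be most careful, as the rest reduces to the elementary scalar minimization above.
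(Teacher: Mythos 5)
Your proof is correct, and it takes a genuinely different route from the paper's. The paper simply invokes Higham's 1988 result that the nearest PSD matrix is $\overset{\sim}{\boldsymbol{L}}=(\boldsymbol{B}+\boldsymbol{H})/2$ with $\boldsymbol{B}=(\boldsymbol{L}+\boldsymbol{L}^T)/2$ and $\boldsymbol{H}=\boldsymbol{V}\,diag(|\lambda_i|)\boldsymbol{V}^T$, specializes to the symmetric case, and then does one line of algebra; you instead prove the projection theorem from scratch by exploiting orthogonal invariance of the Frobenius norm to reduce to a diagonal target, discarding off-diagonal entries, and solving the scalar problem $\min_{Y_{ii}\ge 0}(Y_{ii}-\lambda_i)^2$. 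Your version is more self-contained and makes the uniqueness of the minimizer visible, at the cost of being longer than a citation. More importantly, the care you take in your last paragraph is not mere bookkeeping: the paper's own final step is literally wrong as an identity. From $(\boldsymbol{B}+\boldsymbol{H})/2$ with $\boldsymbol{B}=\boldsymbol{L}$ one gets $\overset{\sim}{\boldsymbol{L}}=\boldsymbol{V}\,diag\bigl((\lambda_i+|\lambda_i|)/2\bigr)\boldsymbol{V}^T$, i.e. the quantities $(\lambda_i+|\lambda_i|)/2$ are the \emph{clamped eigenvalues of} $\overset{\sim}{\boldsymbol{L}}$ itself, conjugated by $\boldsymbol{V}$ --- not an additive correction. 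Writing $\overset{\sim}{\boldsymbol{L}}=\boldsymbol{L}+\boldsymbol{D}$ with $\boldsymbol{D}=diag\bigl((\lambda_i+|\lambda_i|)/2\bigr)$ as the paper does is incorrect on two counts: the true correction is $\boldsymbol{V}\,diag\bigl((|\lambda_i|-\lambda_i)/2\bigr)\boldsymbol{V}^T$, which has the opposite sign on $\lambda_i$ (so that it vanishes when $\boldsymbol{L}$ is already PSD) and is diagonal only in the eigenbasis, generally dense in the standard coordinates. Your identification of the correction as $\boldsymbol{U}(\boldsymbol{\Lambda}_+-\boldsymbol{\Lambda})\boldsymbol{U}^T$ and your insistence on stating in which basis it is diagonal is exactly the repair the proposition needs.
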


\begin{proof}
The nearest positive semi-definite matrix $\overset{\sim}{\boldsymbol{L}}$ to a matrix $\boldsymbol{L}$ can be computed via a spectral decomposition of $\boldsymbol{B}=(\boldsymbol{L}+\boldsymbol{L}^T)/2=\boldsymbol{V}\Lambda\boldsymbol{V}^T=\boldsymbol{V}diag(\lambda_i)\boldsymbol{V}^T$ as:
\[
\overset{\sim}{\boldsymbol{L}}=\boldsymbol{V}diag(d_i)\boldsymbol{V}^T,\]
where
\[d_i=
\begin{cases}
\lambda_i,&\lambda_i\geq 0\\
0,&\lambda_i<0
\end{cases}
\]
or equivalently as: 
\begin{equation}
\label{eqn:psd}
\overset{\sim}{\boldsymbol{L}}=(\boldsymbol{B}+\boldsymbol{H})/2,
\end{equation}
where $\boldsymbol{H}=\boldsymbol{V}diag(|\lambda_i|)\boldsymbol{V}^T$\citep{higham-1988-LAA}.
Since $\boldsymbol{L}$ is symmetric, $\boldsymbol{B}=\boldsymbol{L}=\boldsymbol{V}diag(\lambda_i)\boldsymbol{V}^T$ and
$\boldsymbol{H}=\boldsymbol{V}diag(|\lambda_i|)\boldsymbol{V}^T.$ It follows from (\ref{eqn:psd}) that $\overset{\sim}{\boldsymbol{L}}=\boldsymbol{L}+\boldsymbol{D}$ where $\boldsymbol{D}=(diag(\lambda_i)+diag(|\lambda_i|))/2=diag((\lambda_i+|\lambda_i|)/2)$.
\end{proof}

\begin{figure*}[th!]
\begin{center}
     \subfloat[\label{fig:flops}Computation cost of the backpropagation method in terms of petaFLOPS]{
\includegraphics[width=\figwidthtwo]{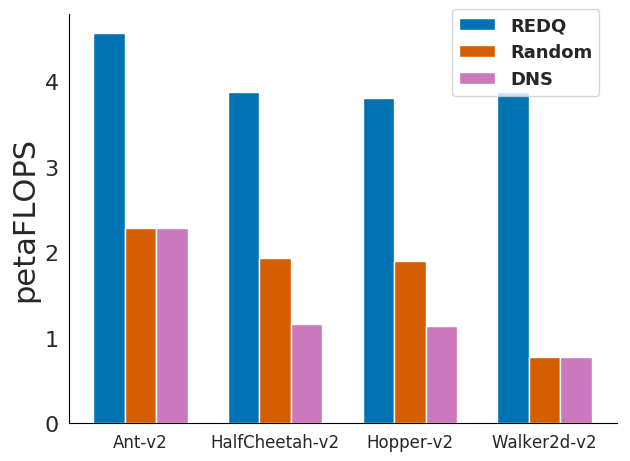}}\hspace{-10pt}
\subfloat[\label{fig:clock}Average wall-clock training time in hours]{
\includegraphics[width=\figwidthtwo]{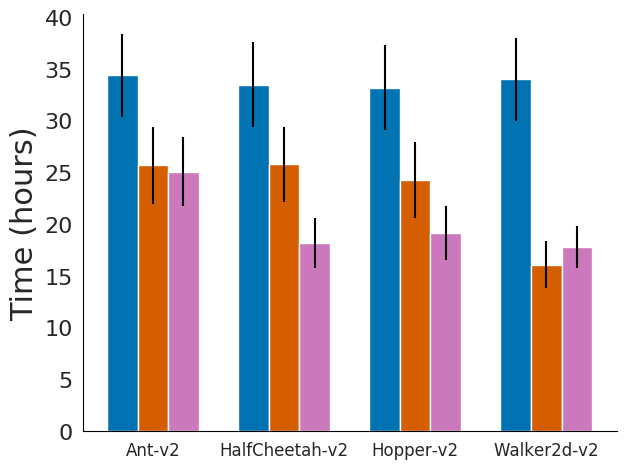}}\hspace{-6pt}
\end{center}
\caption{\label{fig:compute}Both bar graphs represent computation cost of the experiments shown in~\Cref{fig:main_results} in terms of FLOPS and wall-clock time.}
\end{figure*}

Next, recall that under the tabular version of REDQ a subset of action-value functions are updated to $Q_{t+1}^l(s,a)\,=\,Q_t^l(s,a)+\alpha[Y^{MQ}-Q_t^l(s,a)]$ and at $t+1$ an action $a$ is sampled according to the minimum of a random sample $M$ of $N$ Q$\,$functions, such that $|\cal{M}|=M$ ,$\,$i.e. according to $\underset{i\in \cal{M}}{\text{min}}\,Q_{t+1}^i(s,a)$. In REDQ, $M=2$.

For $i\in \cal{M}$, \ Let $I_i\in\{0,1\}$ be a random variable indicating if $Q^i(s,a)$ was updated, i.e. $I_i\sim$Bernoulli$(p_i)$. Then 
\begin{multline}
Q_{t+1}^i(s,a)\,=\,Q_t^i(s,a)+\alpha I_i[Y^{MQ}-Q_t^i(s,a)]\\
=Q_t^\pi(s,a)
+\varepsilon^i_t(s,a) 
+\alpha
I_i[Y^{MQ}
-Q_t^\pi(s,a)-\varepsilon^i_t(s,a)]
\end{multline}
Furthermore, assume that approximation errors $\varepsilon^i_t(s,a)$ are
identically distributed $U(-\tau$,$\tau)$ for each fixed $(s,a)$.
The$\,$theorem below characterizes the relationship between $k$-DPP sampling and the variance$\,$of $\underset{i\in \cal{M}}{\text{min}}\,Q_{t+1}^i(s,a)$ and $\frac{1}{M}\sum_{i\in \cal{M}}Q_{t+1}^i(s,a)$. 
\begin{theorem}
\label{theorem:Theorem 1}
Under the conditions above and for set $\cal{M}$ of M random samples of $N$ Q functions,
\[Var\,Q^{min}=Var(\underset{i\in \cal{M}}{\text{min}}\,Q_{t+1}^i(s,a)|Y^{MQ})\,
\]
decreases if for some $i,j\in \cal{M}$ $Q_t^i(s,a)$ and $Q_t^j(s,a)$ were sampled pre-update according to $k$-DPP. Variance reduction can also be shown for the sample mean 
\[Var\,Q^{avg}=Var(\frac{1}{M}\sum_{i\in \cal{M}}Q_{t+1}^i(s,a)|Y^{MQ}).\] 
Additionally, $Var\,Q^{min}$ and
$Var\,Q^{avg}$ are lower under $k$-DPP than under $k$-random sampling.
\end{theorem}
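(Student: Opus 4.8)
The plan is to reduce everything to the covariance structure of the update indicators $I_i$ and then invoke the repulsive (negative-association) property of the $k$-DPP. First I would substitute the decomposition of $Q_{t+1}^i(s,a)$ and collect, for fixed $Y^{MQ}$, the post-update value as $Q_{t+1}^i = Q_t^\pi + bI_i + \varepsilon_t^i(1-\alpha I_i)$ with $b:=\alpha(Y^{MQ}-Q_t^\pi)$ constant under the conditioning. The central computation is the pairwise covariance: using that the errors $\varepsilon_t^i$ are i.i.d.\ across $i$ with zero mean and independent of the indicators, every cross term carrying an $\varepsilon$ vanishes, leaving the clean identity
\[
\Cov\!\big(Q_{t+1}^i, Q_{t+1}^j \mid Y^{MQ}\big) = b^2\,\Cov(I_i, I_j), \qquad i\neq j.
\]
This is the key lemma: the correlation of the post-update action-values is carried entirely by the correlation of the selection indicators.

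Next, for the sample mean I would expand $\Var\,Q^{avg} = \frac{1}{M^2}\big(\sum_{i}\Var(Q_{t+1}^i) + b^2\sum_{i\neq j}\Cov(I_i,I_j)\big)$; since the diagonal terms do not depend on the joint selection law, $\Var\,Q^{avg}$ is a strictly increasing affine function of the pairwise indicator covariances. For the minimum I would first treat the pair $\{i,j\}$ that is DPP-sampled via the identity $\min(X,Y)=\tfrac12(X+Y)-\tfrac12|X-Y|$. Writing $S=X+Y$, $D=X-Y$ for the two post-update values (common variance $\sigma^2$), $S$ and $D$ are uncorrelated, and under the symmetry of the uniform errors $\Cov(S,|D|)=0$, so
\[
\Var(\min(X,Y)) = \tfrac14\Var(S) + \tfrac14\Var(|D|),
\]
with $\Var(S)=2\sigma^2(1+\rho)$ increasing and $\Var(|D|)=c\cdot 2\sigma^2(1-\rho)$ decreasing in the correlation $\rho$, where $c=\Var(|D|)/\Var(D)<1$. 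Hence $\Var(\min)$ is strictly increasing in $\rho$, i.e.\ in $\Cov(I_i,I_j)$; the general-$M$ case follows by the same concordance argument applied pairwise.

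Finally I would establish the sign of the selection covariance. For a size-$k$ DPP built on the CKA kernel $\boldsymbol{L}$, the probability of co-selecting a pair $\{i,j\}$ is governed by $\det\boldsymbol{L}_{\{i,j\}}=L_{ii}L_{jj}-L_{ij}^2$, which is penalized precisely when the similarity $L_{ij}$ is large; this is the repulsion that makes $\Cov(I_i,I_j)$ strictly more negative than under uniform $k$-random sampling, where all pairs are equiprobable. Combining the monotonicity of the previous two steps with this inequality yields both the ``decreases when a pair is DPP-sampled'' statement and the final comparison that $\Var\,Q^{min}$ and $\Var\,Q^{avg}$ are lower under $k$-DPP than under $k$-random sampling.

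The hard part will be making the last two steps rigorous: for the minimum, discharging $\Cov(S,|D|)=0$ beyond the symmetric/Gaussian-like regime and pushing the pairwise monotonicity to the full $M$-fold minimum; and for the DPP comparison, turning the determinantal penalty on $L_{ij}$ into a quantitative bound on $\Cov(I_i,I_j)$ relative to random sampling, which genuinely requires the negative-association property of $(k)$-DPPs rather than just the two-point marginal heuristic.
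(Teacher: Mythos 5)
Your core reduction coincides with the paper's: both of you condition on $Y^{MQ}$, absorb the target into the constant $b=\alpha(Y^{MQ}-Q_t^\pi(s,a))$, and show that all cross-dependence of the post-update values is carried by the selection indicators, $\Cov(Q_{t+1}^i,Q_{t+1}^j\mid Y^{MQ})=b^2(p_{ij}-p_ip_j)$; the paper writes exactly this for $M=2$ as $\psi+\varphi(p_{ij}-p_i^2)$ and, like you, finishes by asserting that $k$-DPP repulsion drives $p_{ij}$ below $p_ip_j$ for similar pairs while random sampling gives $p_{ij}=p_ip_j$. Where you genuinely diverge is the minimum. The paper's appendix lemma on $Z_i=X_i+cY_i(d-X_i)$ computes the exact law of $Z_{ij\min}$ for the uniform--Bernoulli mixture (MGF, Laplace inversion of the CDF, and the first two moments of the minimum) and then declares that the min case follows ``by a similar process''; you instead use $\min(X,Y)=\tfrac12(X+Y)-\tfrac12|X-Y|$ and argue monotonicity in the correlation $\rho$. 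Your route is shorter and more transparent, but the two steps you flag are genuine gaps rather than formalities: $\Cov(S,|D|)=0$ does not follow from exchangeability alone (for Gaussians it holds because $S$ and $D$ are independent, which fails for these mixtures), and $c=\Var(|D|)/\Var(D)$ depends on the shape of the law of $D$, which itself changes with the joint law of $(I_i,I_j)$, so the monotonicity of $\tfrac{\sigma^2}{2}\left[(1+c)+\rho(1-c)\right]$ in $\rho$ is not yet established. The paper's explicit-distribution machinery is exactly what would be needed to close this, although the paper never actually carries that comparison through either.

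One further caution that applies to both your last step and the paper's: the claim that $k$-DPP sampling makes $\Cov(I_i,I_j)$ strictly more negative than under uniform $k$-random sampling cannot hold for every pair. Since $\sum_i I_i=k$ is deterministic, $\sum_{i\neq j}\Cov(I_i,I_j)=-\bigl(k-\sum_i p_i^2\bigr)$, which is \emph{least} negative when the marginals are nonuniform; so the pairwise inequality can only be asserted for sufficiently similar pairs, which is what the theorem's ``for some $i,j\in\mathcal{M}$'' hedge is doing. Making that quantitative from $\det\boldsymbol{L}_{\{i,j\}}=L_{ii}L_{jj}-L_{ij}^2$, as you propose, is the right direction, but it is the part that neither you nor the paper has actually proved.
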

  \Cref{theorem:Theorem 1} shows why $k$-DPP sampling boosts performance over $k$-random sampling. For the special case that$\,$all $N\,$Q functions are close to being dissimilar,  $k$-DPP sampling $k$ of the $N$ Q-functions approaches uniform $k$-random sampling with $P_{\cal{K}}=\frac{1}{{N \choose k}}$ for all sets of $\cal{K}$ size $k$. We summarize this in the corollary below and note that under the $k$-DPP scheme, just as in $k$-random sampling some variance is retained, which is beneficial for exploration. 

\begin{corollary}
If all $N$ Q-functions are completely dissimilar, $k$-DPP sampling is
equivalent to $k$-random uniform sampling with each set $\cal{K}$ with cardinality $k$ having equal probability $P_{\cal{K}}=\frac{1}{{N \choose k}}$.
\end{corollary}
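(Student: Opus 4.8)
The plan is to reduce the $k$-DPP probability mass function to the uniform distribution by computing the kernel matrix $\boldsymbol{L}$ explicitly under the ``completely dissimilar'' hypothesis. The starting point is the normalized $k$-DPP law displayed earlier, which assigns to each size-$k$ subset $x$ the probability $P^{(k)}(\boldsymbol{X}=x)=\det(\boldsymbol{L}_x)\big/\sum_{|x'|=k}\det(\boldsymbol{L}_{x'})$, where $\boldsymbol{L}_x$ is the principal submatrix indexed by $x\times x$. Everything therefore hinges on the values taken by the minors $\det(\boldsymbol{L}_x)$, so the whole argument is a matter of pinning down $\boldsymbol{L}$ and then evaluating determinants.

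First I would record the two structural facts about the CKA kernel defined in \Cref{eq:sim_matrix}. Because CKA is a \emph{normalized} statistic, $\text{CKA}(Q_{\phi_i},Q_{\phi_i})=\text{HSIC}(\cdot,\cdot)/\text{HSIC}(\cdot,\cdot)=1$, so every diagonal entry of $\boldsymbol{L}$ equals $1$. The hypothesis that all $N$ Q-functions are completely dissimilar means precisely that $\text{CKA}(Q_{\phi_i},Q_{\phi_j})=0$ for all $i\neq j$, so every off-diagonal entry vanishes. Together these give $\boldsymbol{L}=\boldsymbol{I}_N$, the $N\times N$ identity matrix. I would also note in passing that $\boldsymbol{I}_N$ is already Hermitian positive semi-definite, so the diagonal-perturbation step of the preceding Proposition acts trivially and no correction is needed before sampling.

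Next I would evaluate the numerator and the normalizer. Since $\boldsymbol{L}=\boldsymbol{I}_N$, any principal submatrix $\boldsymbol{L}_x$ indexed by a size-$k$ set $x$ equals $\boldsymbol{I}_k$, whence $\det(\boldsymbol{L}_x)=\det(\boldsymbol{I}_k)=1$ for every such $x$. The normalizing constant then collapses to a count, $\sum_{|x'|=k}\det(\boldsymbol{L}_{x'})=\sum_{|x'|=k}1=\binom{N}{k}$, which is the number of size-$k$ subsets of $\mathcal{S}$. Substituting back yields $P^{(k)}(\boldsymbol{X}=x)=1/\binom{N}{k}$, independent of $x$, which is exactly the uniform distribution over size-$k$ subsets claimed in the corollary and matches $P_{\mathcal{K}}=\frac{1}{\binom{N}{k}}$.

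This is a direct computation rather than a deep one, so there is no genuine obstacle; the only point requiring care is the justification that ``completely dissimilar'' forces $\boldsymbol{L}$ all the way to the identity rather than to some rescaled diagonal $c\,\boldsymbol{I}_N$. It is the normalization property of CKA that pins the diagonal to $1$ and rules out a surviving common scale, and this is the step I would state explicitly. Once $\boldsymbol{L}=\boldsymbol{I}_N$ is in hand, the determinant and counting steps are immediate, completing the proof.
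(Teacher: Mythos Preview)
Your proposal is correct and follows essentially the same route as the paper: both arguments observe that ``completely dissimilar'' forces the off-diagonal CKA entries to vanish so that $\boldsymbol{L}=\boldsymbol{I}_N$, after which all size-$k$ principal minors equal $1$ and the $k$-DPP law reduces to the uniform probability $1/\binom{N}{k}$. Your write-up is simply more explicit than the paper's one-line justification, in particular by spelling out why the diagonal equals $1$ via the CKA normalization and by noting that the positive semi-definite correction is vacuous here.
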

We show the proofs of \Cref{theorem:Theorem 1} in the appendix. Corollary 1 follows from the fact that when the network activations are completely dissimilar, the off-diagonal elements of the $L$-matrix are 0 since CKA values are zero. Thus in this case the $L$-matrix is just the identity matrix and the resulting sampled item probabilities are equal by equation (\ref{eqn:l-ensemble2}).  

\begin{figure*}[th!]
\begin{center}
    \subfloat[Ant with two and three\\neural networks\label{fig:ant-2}]{
\includegraphics[width=\figwidthfour]{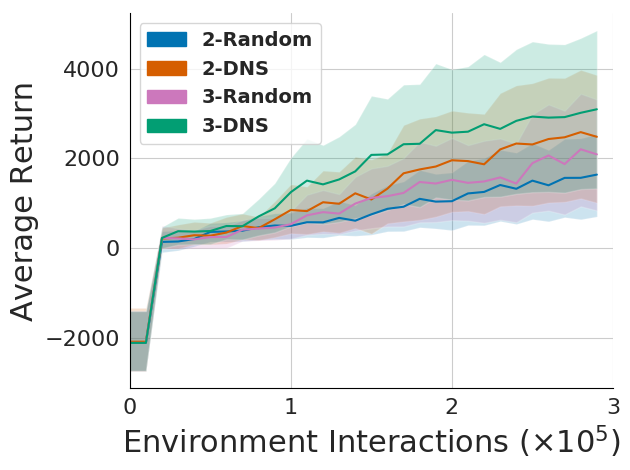}}\hspace{-6pt}
     \subfloat[Ant with four and five\\neural networks\label{fig:ant-4}]{
\includegraphics[width=\figwidthfour]{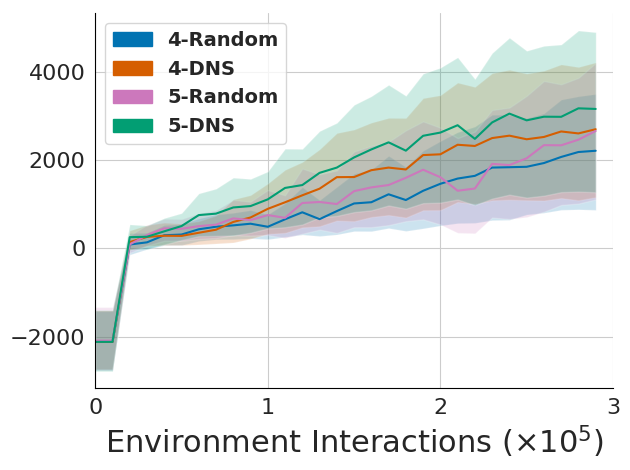}}\hspace{-10pt}
     \subfloat[Walker2d with two and\\three neural networks\label{fig:walker-2}]{
\includegraphics[width=\figwidthfour]{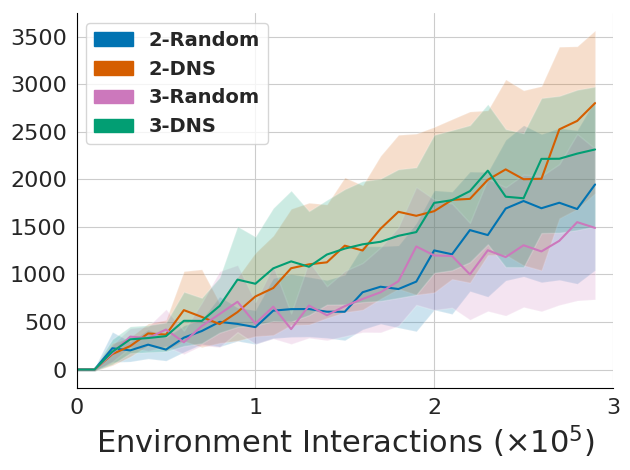}}\hspace{-10pt}
\subfloat[Walker2d with four and five\\neural networks\label{fig:walker-4}]{
\includegraphics[width=\figwidthfour]{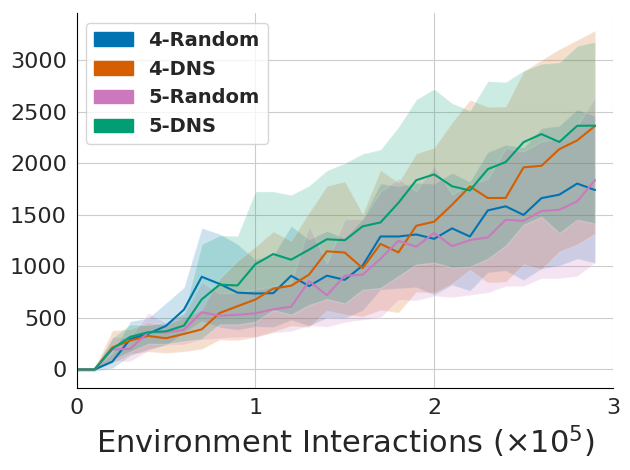}}\hspace{-10pt}
\end{center}
\caption{Training curves for Ant-v2 and Walker-v2 environments for varying values of $k$ for both DNS and random sampling.\label{fig:random}}
\end{figure*}

\begin{figure*}[t!]
\begin{center}
    \subfloat[Ant-v2]{
\includegraphics[width=\figwidthfour]{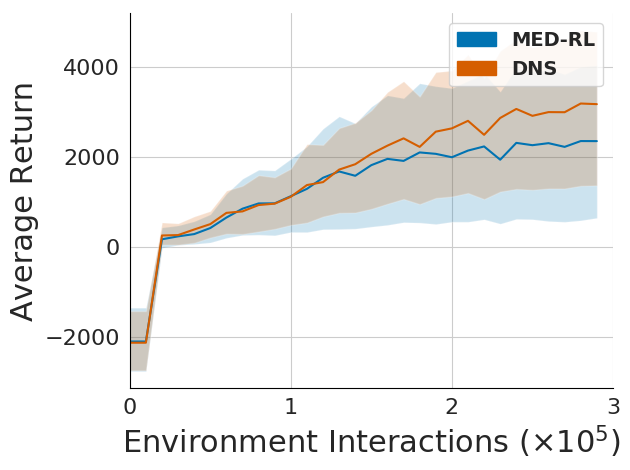}}\hspace{-6pt}
     \subfloat[HalfCheetah-v2]{
\includegraphics[width=\figwidthfour]{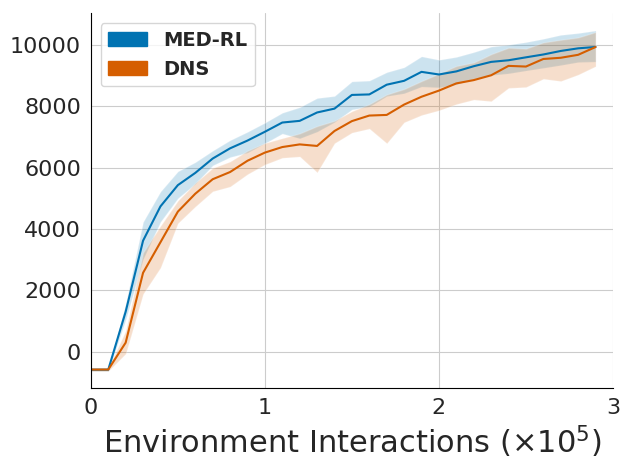}}\hspace{-10pt}
     \subfloat[Hopper-v2]{
\includegraphics[width=\figwidthfour]{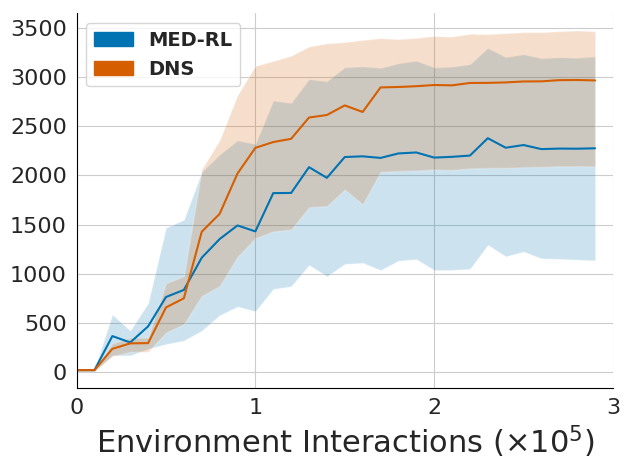}}\hspace{-10pt}
\subfloat[Walker2d-v2]{
\includegraphics[width=\figwidthfour]{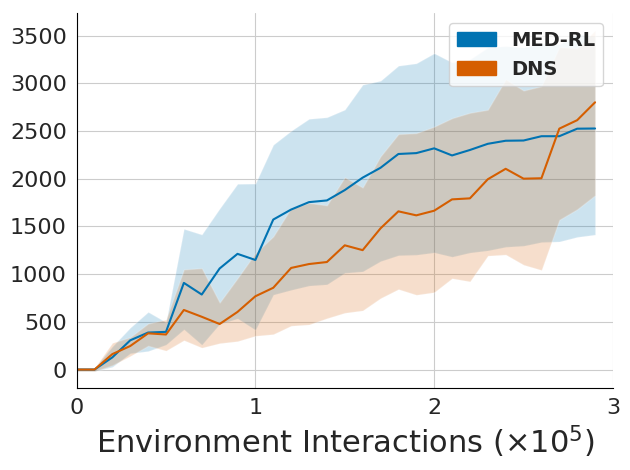}}\hspace{-10pt}
\end{center}
\caption{\label{fig:gini} Training plots comparing DNS with MED-RL, a regularization method that uses Gini coefficient to maximize diversity.}
\end{figure*}

\section{Experiments}\label{sec:experiments}
We designed our experiments to answer the following questions:
\begin{enumerate}
    \item Can \algoname\ match the performance baseline REDQ while training only a subset of critic networks?
    \item Is \algoname\ better than random sampling?
    \item Is DNS better than diversity regularization?
    \item Does choice of $k$ matter for \algoname?.
\end{enumerate} 
\subsection{Experimental Setup}
We evaluate DNS on several different continuous control tasks from MuJoCo~\citep{Todorov-2012-IROS} and compare DNS with baseline REDQ where all the ten critics are trained at every training step and random sampling of neural networks for training. For DNS and random sampling, we sampled between and two and five networks for our experiments. Following the setup, we report the highest returns after 300K environment interactions on Ant-v2, HalfCheetah-v2, Hopper-v2 and Walker2d-v2 environments. We report the mean and the standard deviation across ten runs in~\Cref{fig:main_results}. For clarity, the results are also shown in~\Cref{table:main_results}.
From~\Cref{fig:main_results,table:main_results}, we can see that DNS consistently outperforms REDQ and random sampling on Ant-v2, Hopper-v2 and Walker2d-v2 and matches the performance of REDQ on HalfCheetah-v2 environment. Note that the goal of this paper is to match the performance of REDQ while reducing the computation cost. For that reason we did not perform any hyperparameter tuning. All the hyperparameters such as learning rate, batch size, neural network size and the seeds were kept fixed across all the experiments. The only hyperparameter that has been tuned in this paper is $k$ which samples the number of neural networks.

Details of the hyperparameters used in our experiments are shown in~\Cref{sec:params}

\subsection{Computational Analysis}
We measured the computation cost of the experiments shown in~\Cref{fig:main_results} to verify that sampling a subset of critic networks at training indeed reduces the computation cost. We measured the computation cost in terms wall-clock time which is a subjective metric and depends on the computing infrastructure and in FLOPS which regard as hardware independent metric. Since we are interested in reducing the backpropagation steps, we wrapped the {\em Backward()} function in Pytorch's profiler and measured the FLOPS needed to compute the {\em Backward()} function. We then multiplied the obtained FLOPS with total number of training steps. The resulting plot is shown in~\Cref{fig:flops}.

Similarly for measuring the wall-clock time, we wrapped the whole training procedure by CodeCarbon~\citep{codecarbon}. Since wall-clock time is subjective and can be affected by multiple factors, we calculated the average with standard deviation. The resulting plot is shown in~\Cref{fig:clock}. From~\Cref{fig:compute}, we can see that DNS achieved better performance than baseline REDQ in at least 50\% FLOPS. The key point to note that DNS achieved 15\% more average cumulative reward in less than 25\% of FLOPS as compared to baseline REDQ on Walker2d-v2 environment.

\begin{figure*}[t!]
\begin{center}
    \subfloat[Ant-v2]{
\includegraphics[width=\figwidthfour]{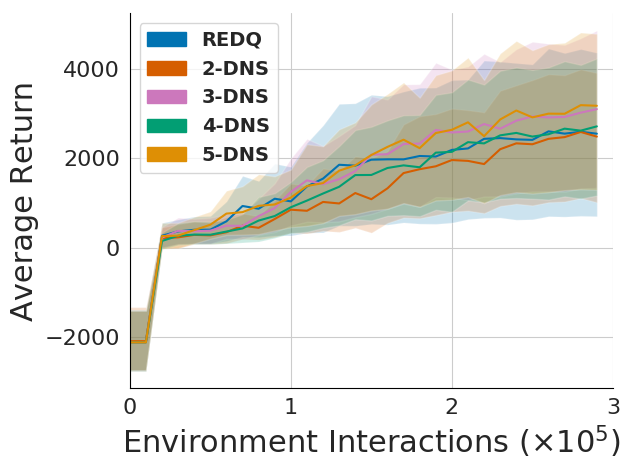}}\hspace{-6pt}
     \subfloat[HalfCheetah-v2]{
\includegraphics[width=\figwidthfour]{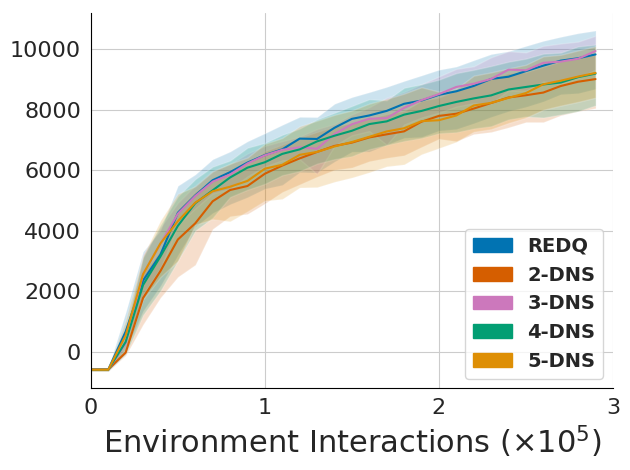}}\hspace{-10pt}
     \subfloat[Hopper-v2]{
\includegraphics[width=\figwidthfour]{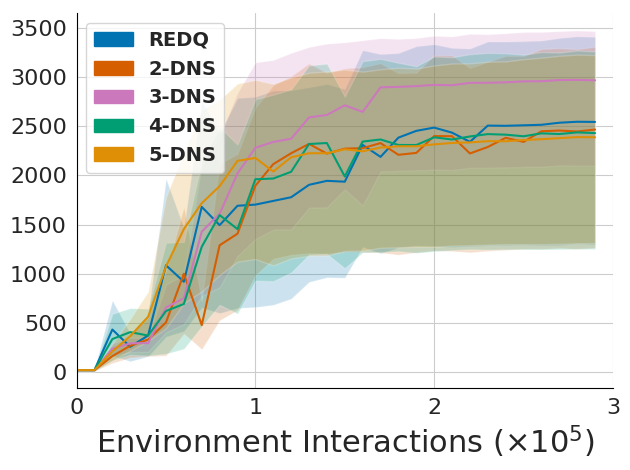}}\hspace{-10pt}
\subfloat[Walker2d-v2]{
\includegraphics[width=\figwidthfour]{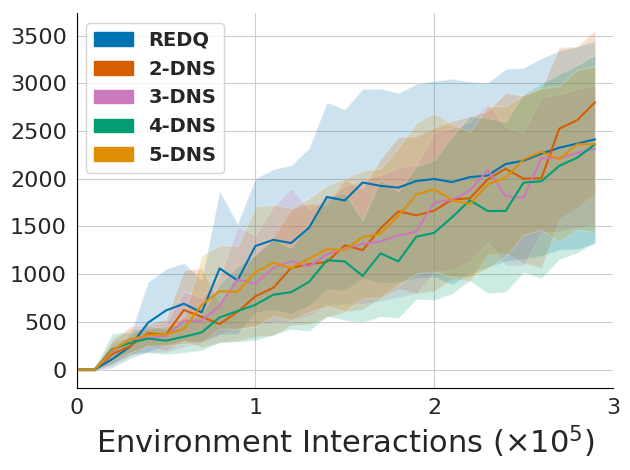}}\hspace{-10pt}
\end{center}
\caption{Training curves showing the effect of $k$ on DNS\label{fig:ablation}}
\end{figure*}

\subsection{Is DNS Better than Random Sampling?}
To address the question whether Random sampling is better than DNS, we plotted the training curves for Ant-v2 and Walker2d-v2 environments for varying values of $k$ ranging between 2 and 5 in~\Cref{fig:random}. To avoid confusion, we split the plots in two figures for each environment. ~\Cref{fig:ant-2,fig:walker-2} shows the training plots for $k=2$ and $k=3$  and ~\Cref{fig:ant-4,fig:walker-4} shows the training plots for $k=4$ and $k=5$. From~\Cref{fig:random}, we can see that for every value of $k$, DNS outperforms random sampling in both environments. 

\subsection{Is DNS Better than Diversity Regularization?}
Recently~\cite{sheikh2022maximizing} proposed MED-RL that uses economics inspired regularizers such as Gini and Theil coefficients to maximize diversity between the neural networks. Since \kdpp\ is an alternate way on inducing diversity, we compared DNS with MED-RL (Gini) where we augmented REDQ with the Gini as proposed in~\citep{sheikh2022maximizing}.~\Cref{fig:gini} shows the training curve for the MED-RL and DNS where we can see that DNS clearly outperforms MED-RL on Ant-v2 and Hopper-v2 environments and matches the performance of MED-RL on HalfCheetah-v2 environment. One quick point to note that our results of MED-RL do not match with the results proposed in~\citep{sheikh2022maximizing}. We attribute this discrepancy to multiple factors such as different learning rates and they have have shown results for five seeds only where we have shown results for ten seeds.

\subsection{Does  choice of $k$ matter?}
To analyze the effect of choice of $k$, we performed an ablation study in which we trained DNS on varying values of $k$ for all four MuJoCo environments and plotted the training curves in~\Cref{fig:ablation}. For the Ant-v2 environment, $k=2$ and $k=4$ performed similar to baseline REDQ. For HalfCheetah-v2 and Walker2d environments, most values of $k$ under-performed when compared with REDQ while for Hopper-v2, $k=3$ outperformed REDQ significantly. 


\begin{table}[t]
\centering
\caption{Hyperparameters for continuous control tasks\label{sec:params}}
\begin{tabular}{|c||c|} 
 \hline
 Hyperparameter & Value\\ 
 \hline \hline
  Target weight $\tau$  & $1e^{-3}$ \\ 
  Actor learning rate   &  $3e^{-4}$ \\ 
  Critic learning rate   & $ 3e^{-4}$\\ 
  Replay buffer   &  $1e^{6}$\\ 
  Batch size    & $256$ \\
  Exploration steps & 25000 \\
  Optimizer & Adam \\ 
  Hidden Llayer size & 256 \\
  Number of critics (REDQ)   & $10$ \\
  Regularization weight & $1e^{-8} $ \\
 \hline \hline
\end{tabular}
\end{table}

\section{Implementation Details and Hyperparameters}
\label{sec:hyperparameters}
For REDQ, we used the code provided by the authors {\textcolor{blue}{https://github.com/watchernyu/REDQ}}. For \kdpp\ we used the DPPy package {\textcolor{blue}{https://github.com/guilgautier/DPPy}}. The complete list of hyperparmeters is given in~\Cref{sec:params}.

\subsection{Computing Infrastructure}
All the experiments were performed on a Kubernetes managed cluster with Nvidia V100 GPUs and Intel Skylake CPUs. Each experiment was run as an individual Kubernetes job with 5 CPUs, 16GB of RAM and 1 GPU.

\section{Conclusion}
In this paper, we proposed \algoname: a \textbf{D}eterminantal Point Process based \textbf{N}eural Network \textbf{S}ampler that specifically uses \kdpp\ to sample a subset of neural networks for backpropagation at every training step. This sampling allowed us to reduced the computation cost by 50\% during training. We evaluated DNS on MuJoCo environments and compared our results with baseline REDQ and random sampling. Additionally, DNS outperformed MED-RL, a regularization method that maximizes diversity between the ensemble of neural networks in deep reinforcement learning.

\clearpage
\bibliography{ref}
\bibliographystyle{icml2021}
\clearpage
\appendix
\onecolumn
\section*{Supplementary}
\section{Algorithm}
\begin{algorithm}[h!]
   \caption{DNS: REDQ version}
   \label{alg:DNS}
\begin{algorithmic}
   \STATE Initialize policy parameters $\theta$,  $N$ Q-function parameters $\phi_{i}$, $i=1,\ldots,N$, empty replay buffer $\mathcal{D}$. Set target parameters $\phi_{\text {targ}, i} \leftarrow \phi_{i}, \text{ for } i =1, 2, \ldots, N$
  \REPEAT
  \STATE Take one action $a_t \sim \pi_\theta(\cdot | s_t)$. Observe reward $r_t$, new state $s_{t+1}$.
  \STATE Add data to buffer: $\mathcal{D} \leftarrow \mathcal{D} \cup \{(s_t, a_t, r_t, s_{t+1})\}$ 
  \FOR {$G$ updates}  
\STATE Sample a mini-batch $B=\left\{\left(s, a, r, s^{\prime} \right)\right\}$ from $\mathcal{D}$
\STATE Fetch $Q_{\phi_{i}}(s, a) \text{ for } i =1, 2, \ldots, N$
\STATE Compute similarity matrix $\boldsymbol{L}$: \[
\boldsymbol{L} = CKA_{i,j\in N} (Q_{\phi_{i}}(s, a), Q_{\phi_{j}}(s, a))
\]

\STATE Sample a set $K$ of  $k$ distinct indices from $\{1, 2, \ldots, N\}$:
\[
    K = DPP(\boldsymbol{L}, k)
\]
\STATE Sample a set $\cal{M}$ of  $M$ distinct indices from $\{1, 2, \ldots, N\}$
\STATE Compute the Q target $y$ (same for all of the $k$ Q-functions):
\[
y=r+\gamma\left(\min _{i\in \cal{M}} Q_{\phi_{\text {targ},i}}\left(s^{\prime}, \tilde{a}^{\prime}\right)-\alpha \log \pi_{\theta}\left(\tilde{a}^{\prime} \mid s^{\prime}\right)\right),\quad \tilde{a}^{\prime} \sim \pi_{\theta}\left(\cdot \mid s^{\prime}\right)
\]
\FOR{$i \in K$}  
\STATE Update $\phi_i$ with gradient descent using
\[
\nabla_{\phi} \frac{1}{|B|} \sum_{\left(s, a, r, s^{\prime} \right) \in B}\left(Q_{\phi_{i}}(s, a)-y\right)^{2} 
\]
\STATE Update target networks with $\phi_{\operatorname{targ},i} \leftarrow \rho \phi_{\operatorname{targ},i}+(1-\rho) \phi_{i}$
\ENDFOR
\ENDFOR 
\STATE Update policy parameters $\theta$ with gradient ascent using
\[
\nabla_{\theta} \frac{1}{|B|} \sum_{s \in B}\left(\frac{1}{N}\sum_{i =1}^{N} Q_{\phi_{i}}\left(s, \tilde{a}_{\theta}(s)\right)-\alpha \log \pi_{\theta}\left(\tilde{a}_{\theta}(s) | s\right)\right),
\quad \tilde{a}_{\theta}(s) \sim \pi_{\theta}(\cdot \mid s)
\]
\UNTIL{}
\end{algorithmic}
\end{algorithm}

\section{Proofs}
\subsection{Proof of Theorem 1}
\begin{lemma}
Let $X_i\sim$U$\left(a,b\right)$ and $Y_i\sim$B$\left(1,p_i\right)$. Then for
$Z_i\,=\,X_i+cY_i(d-X_i)$, $Z_{ij\text{min}}=\,min(Z_i,Z_j)$, $d\in(a,b)$, $c\in(0,1)$ we have:
\begin{enumerate}[(i)]
\item 
$Ee^{Z_it}=(1-p_i)(\frac{e^{tb}-e^{ta}}{t(b-a)})+p_ie^{cdt}(
\frac{e^{(1-c)tb}-e^{(1-c)ta}}{t(1-c)(b-a)})$ and $E[Z_i] =
1-p_i)(\frac{e^{tb}-e^{ta}}{t(b-a)})+p_ie^{cdt}(
\frac{e^{(1-c)tb}-e^{(1-c)ta}}{t(1-c)(b-a)})$
\item The distributions of $Z_i$ and $Z_{ij\text{min}}$ are characterized by:
\begin{align}
F_{Z_i}(z)&=\frac{(1-p_i)}{a-b}((z-b)\boldsymbol{1}_{(b,\infty)}(z)-(z-a)
\boldsymbol{1}_{(a,\infty)}(z))\\
&+\frac{p_i}{(1-c)(a-b)}((z-(dc-b(c-1)))\boldsymbol{1}_{(dc-b(c-1),
\infty)}(z)\\
&-(z-(dc-a(c-1)))\boldsymbol{1}_{(dc-a(c-1),\infty)}(z))
\end{align}
and
\begin{align}
F_{Z_{ij\text{min}}}(z)&=\,\frac{2(1-p_i)}{a-b}\beta(z,b,a)+
\frac{2p_i}{(1-c)(a-b)}\beta(z,dc-b(c-1),dc-a(c-1))\\
&-(\frac{(1-p_{i|j})}{a-b}\beta(z,b,a)\\
&+\frac{p_{i|j}}{(1-c)(a-b)}\beta(z,dc-b(c-1),dc-a(c-1)))(\frac{(1-p_i)}{a-b}\beta(z,b,a)\\
&+\frac{p_i}{(1-c)(a-b)}\beta(z,dc-b(c-1),dc-a(c-1))
\end{align}
respectively, where 
\[p_i\,=\,P(Y_i=1) \text{ and } p_{i|j}=P(Y_i=1|Y_j=1),
\]
\[
\beta(z,\theta,\alpha)\,=\,((z-\theta)\boldsymbol{1}_{(\theta,
\infty)}(z)-(z-\alpha)\boldsymbol{1}_{(\alpha,\infty)}(z)),\,\theta\geq
\alpha
\]
\end{enumerate}
\end{lemma}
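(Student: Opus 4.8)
The plan is to exploit the fact that $Z_i$ is a two-component mixture governed by the Bernoulli switch $Y_i$, and that the $X$'s are independent across indices. First I would split on the value of $Y_i$: conditional on $Y_i=0$ we have $Z_i=X_i\sim U(a,b)$, while conditional on $Y_i=1$ we have $Z_i=(1-c)X_i+cd$, which is an affine image of a uniform and is therefore distributed as $U\bigl((1-c)a+cd,(1-c)b+cd\bigr)$. Thus the law of $Z_i$ is the mixture placing weight $1-p_i$ on $U(a,b)$ and weight $p_i$ on this shifted uniform, and everything below follows from decomposing by this switch.

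For part (i) I would compute the moment generating function of each component separately. The uniform $U(a,b)$ has MGF $(e^{tb}-e^{ta})/(t(b-a))$; for the $Y_i=1$ component I factor out $e^{cdt}$ and evaluate the uniform MGF at the rescaled argument $(1-c)t$, producing $e^{cdt}(e^{(1-c)tb}-e^{(1-c)ta})/\bigl((1-c)t(b-a)\bigr)$. The law of total expectation then weights these by $1-p_i$ and $p_i$, giving the claimed $Ee^{Z_it}$. The mean $E[Z_i]$ follows either by differentiating this MGF at $t=0$ or, more directly, by linearity as $E[Z_i]=(1-p_i)\tfrac{a+b}{2}+p_i\bigl((1-c)\tfrac{a+b}{2}+cd\bigr)$; the displayed expression in the statement appears to be an accidental transcription of the MGF and should read this way.

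For part (ii), the distribution of $Z_i$ comes from the same mixture applied to CDFs: $(1-p_i)$ times the ramp CDF of $U(a,b)$ plus $p_i$ times the ramp CDF of the shifted uniform. Each uniform CDF is the clipped linear function that is $0$ below its left endpoint, rises linearly across the support, and saturates at $1$, and I would verify that $\beta(z,\theta,\alpha)$ encodes exactly such a clipped ramp so that, after dividing by the appropriate signed interval width, each component CDF (and hence $F_{Z_i}$) is recovered. For the minimum I would use $\{Z_{ij\min}\le z\}=\{Z_i\le z\}\cup\{Z_j\le z\}$, giving $F_{Z_{ij\min}}(z)=F_{Z_i}(z)+F_{Z_j}(z)-P(Z_i\le z,Z_j\le z)$. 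The key observation is that, conditional on the pair $(Y_i,Y_j)$, the variables $Z_i$ and $Z_j$ are independent, so the joint CDF is a sum over the four outcomes of $(Y_i,Y_j)$ of products of component CDFs weighted by the joint law of $(Y_i,Y_j)$. That joint law is exactly where the $k$-DPP dependence enters through $p_{i|j}=P(Y_i=1\mid Y_j=1)$, and grouping the four terms by the value of one Bernoulli yields the two-factor ramp expression in the statement, with the $p_{i|j}$-ramp representing the conditional CDF of one variable given the other is switched on.

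The \emph{main obstacle} is the bookkeeping in the minimum: one must track all four $(Y_i,Y_j)$ cases, keep the orientation of the $\beta$ ramps straight (the $Y=1$ endpoints $dc-b(c-1)$ and $dc-a(c-1)$ satisfy the required ordering $\theta\ge\alpha$ precisely because $c<1$ and $a<b$), and fold $p_i$ and $p_{i|j}$ into the two factors so the joint term assembles correctly. I would sanity-check the final expression in the independent, identically distributed limit, where $p_{i|j}=p_i$ and the formula must collapse to $2F(z)-F(z)^2=1-(1-F(z))^2$, the standard CDF of the minimum of two i.i.d. draws.
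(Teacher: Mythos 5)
Your proposal is correct and lands on the same formulas, but it reaches the marginal law by a genuinely more elementary route than the paper. The paper conditions on $X_i$ to compute the moment generating function, $Ee^{Z_it}=E\bigl[e^{X_it}\bigl(1-p_i+p_ie^{c(d-X_i)t}\bigr)\bigr]$, and then recovers $F_{Z_i}$ by inverting the Laplace transform $\frac{1}{t}\mathcal{L}\{f\}(t)$ term by term into the ramp functions $(z-\theta)\boldsymbol{1}_{(\theta,\infty)}(z)$; you instead condition on the Bernoulli switch $Y_i$, recognize $Z_i$ as a two-component mixture of $U(a,b)$ and $U\bigl((1-c)a+cd,(1-c)b+cd\bigr)$, and write both the MGF and the CDF directly as the corresponding mixtures. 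This avoids the transform inversion entirely and gives the mean $E[Z_i]=(1-cp_i)\tfrac{a+b}{2}+cdp_i$ by linearity, which agrees with what the paper computes further down; you are also right that the displayed $E[Z_i]$ in the statement is an accidental repetition of the MGF. For the minimum, both you and the paper use $F_{Z_{ij\min}}(z)=2F_{Z_i}(z)-F_{Z_iZ_j}(z,z)$ and represent the joint term as the product of a $p_{i|j}$-weighted ramp and a $p_i$-weighted ramp. One caution: your honest four-case decomposition over $(Y_i,Y_j)$ gives $\sum_{y_j}P(Y_j{=}y_j)G_{y_j}(z)\bigl[(1-q_{y_j})G_0(z)+q_{y_j}G_1(z)\bigr]$ with $q_{y_j}=P(Y_i{=}1\mid Y_j{=}y_j)$, and this collapses to the stated two-factor product only when $q_0=q_1$; the paper sidesteps this by simply asserting $F_{Z_iZ_j}(z,z)=F_{Z_i|Z_j}(z)F_{Z_j}(z)$ with $p_{i|j}$ substituted for $p_i$. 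So if you carry out your bookkeeping literally you will either need that additional assumption or arrive at a four-term expression that is not identical to the displayed one --- a point worth noting, since your approach exposes it while the paper's does not. Your i.i.d.\ sanity check $2F-F^2=1-(1-F)^2$ is a useful verification that the paper does not perform.
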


\begin{proof}
\begin{enumerate}[(i)]
\item The moment generating function of $Z_i\,=\,X_i+cY_i(d-X_i)$
\begin{align}
Ee^{Z_it}\,&=\,Ee^{(X_i+cY_i(d-X_i))t}=E[Ee^{(X_i+cY_i(d-X_i))t}|X_i] \nonumber \\
&=E[e^{X_it}E[e^{Y_ic(d-X)t}|X]]=E[e^{X_it}[1-p_i+p_ie^{c(d-X_i)t}]]=(1-p_i)E[
e^{X_it}]+p_ie^{cdt}e^{X(1-c)t} \nonumber \\
&=\,(1-p_i)(\frac{e^{tb}-e^{ta}}{t(b-a)})+p_ie^{cdt}(
\frac{e^{(1-c)tb}-e^{(1-c)ta}}{t(1-c)(b-a)}).
\end{align}
\item It follows from (i) that 

$\mathcal{L}\{f\}(t)\,=\,Ee^{-Z_it}\,=\,(1-p_i)(%
\frac{e^{-tb}-e^{-ta}}{t(a-b)})+p_ie^{-cdt}(%
\frac{e^{(c-1)tb}-e^{(c-1)ta}}{t(1-c)(a-b)})$

and
\begin{align}
F_{Z_i}(z)\,&=\mathcal{L}^{-1}\{\frac{1}{t}\mathcal{L}\{f\}(t)\,\}(z)\,\nonumber \\
&=\,\mathcal{L}^{-1}\{\,(1-p_i)(%
\frac{e^{-tb}-e^{-ta}}{t^2(a-b)})+p_ie^{-cdt}(%
\frac{e^{(c-1)tb}-e^{(c-1)ta}}{t^2(1-c)(a-b)})\,\}(z)\, \nonumber\\
&=\mathcal{L}^{-1}\{\,\frac{(1-p_i)}{a-b}(\frac{e^{-tb}-e^{-ta}}{t^2})+
\frac{p_i}{(1-c)(a-b)}e^{-cdt}(\frac{e^{(c-1)tb}-e^{(c-1)ta}}{t^2})\,\}(z) \nonumber \\
&=\frac{(1-p_i)}{a-b}\mathcal{L}^{-1}\{\,(\frac{e^{-tb}-e^{-ta}}{t^2})\}(z)+
\frac{p_i}{(1-c)(a-b)}\mathcal{L}^{-1}\{\,e^{-cdt}(
\frac{e^{(c-1)tb}-e^{(c-1)ta}}{t^2})\,\}(z) \nonumber\\
&=\frac{(1-p_i)}{a-b}(\mathcal{L}^{-1}\{\,\frac{e^{-tb}}{t^2})\}(z)-
\mathcal{L}^{-1}\{\,\frac{e^{-ta}}{t^2})\}(z)) \nonumber \\
&+\frac{p_i}{(1-c)(a-b)}(\mathcal{L}^{-1}\{\,
\frac{e^{-t(dc-b(c-1))}}{t^2})\}(z)-\mathcal{L}^{-1}\{\,
\frac{e^{-t(dc-a(c-1)}}{t^2})\}(z)) \nonumber \\
&=\frac{(1-p_i)}{a-b}((z-b)\boldsymbol{1}_{(z>b)}-(z-a)
\boldsymbol{1}_{(z>a)}) \nonumber \\
&+\frac{p_i}{(1-c)(a-b)}((z-(dc-b(c-1)))
\boldsymbol{1}_{(z>(dc-b(c-1)))}-(z-(dc-a(c-1)))
\boldsymbol{1}_{(z>(dc-a(c-1)))}) \nonumber \\
&=\frac{(1-p_i)}{a-b}((z-b)\boldsymbol{1}_{(b,\infty)}(z)-(z-a)
\boldsymbol{1}_{(a,\infty)}(z)) \nonumber \\
&+\frac{p_i}{(1-c)(a-b)}((z-(dc-b(c-1)))\boldsymbol{1}_{(dc-b(c-1),
\infty)}(z)-(z-(dc-a(c-1)))\boldsymbol{1}_{(dc-a(c-1),\infty)}(z))
\end{align}

Notice that
\begin{align*}
E[Z_i]\,&=\,(1-p_i)(\frac{a+b}{2})\,+\,\frac{p_i}{(1-c)(b-a)}[
\frac{1}{2}(c-1)(b-a)[(c-1)(a+b)-2cd]]\\
&=\,(1-p_i)(\frac{a+b}{2})\,+\,\frac{p_i}{2}[(1-c)(a+b)+2cd] \\
&=\,(1-cp_i)(\frac{a+b}{2})\,+cdp_i.
\end{align*}

Furthermore we can derive the variance Var$(Z_i)\,=\,E[Z^2_i]-(E[Z_i])^2$ using
\[
(E[Z_i])^2=(1-cp_i)^2(\frac{a+b}{2})^2\,+(1-cp_i)(a+b)cdp_i+(cdp_i)^2
\]
and 
\begin{align*}
E[Z^2_i]&=(1-p_i)(\frac{a^2+ab+b^2}{3})\,+\,
\frac{p_i}{3}[(1-c)^2(a^2+ab+b^2)+3cd((a+b)(1-c)+cd)] \\
&=(1-p_i)(\frac{a^2+ab+b^2}{3})\,+\,
\frac{p_i}{3}[(1-c)^2(a^2+ab+b^2)+3cd((a+b)(1-c)+cd)] \\
&=(1-2cp_i+p_ic^2)(\frac{a^2+ab+b^2}{3})+cdp_i((a+b)(1-c)+cd)
\end{align*}
\item

Since $Z_i$ are identically distributed but not necessarily
independent$,\,$the distribution of \ $\,Z_{ijmin}=\,min(Z_i,Z_j)\,$is characterized by

\[
F_{Z_{ijmin}}(z\,)\,=F_{Z_i}(z)+F_{Z_j}(z)-F_{Z_iZ_j}(z,z)\,=\,
2F_{Z_i}(z)-F_{Z_iZ_j}(z,z)
\]
where
\[
F_{Z_iZ_j}(z,z)\,=P(Z_j\leq z,Z_j\leq z)=F_{Z_i|Z_j}(z)F_{Z_j}(z)
\] is the joint distribution of $Z_i$ and $Z_j$. 
Hence,
\begin{align*}
F_{Z_{ijmin}}(z\,)&=\frac{2(1-p_i)}{a-b}((z-b)\boldsymbol{1}_{(b,%
\infty)}(z)-(z-a)\boldsymbol{1}_{(a,\infty)}(z)) \\
&+\frac{2p_i}{(1-c)(a-b)}((z-(dc-b(c-1)))\boldsymbol{1}_{(dc-b(c-1),\infty)}(z)-(z-(dc-a(c-1)))\boldsymbol{1}_{(dc-a(c-1),\infty)}(z))\\
&-(\frac{(1-p_{i|j})}{a-b}((z-b)\boldsymbol{1}_{(b,\infty)}(z)-(z-a)\boldsymbol{1}_{(a,\infty)}(z)) \\
& +\frac{p_{i|j}}{(1-c)(a-b)}((z-(dc-b(c-1)))\boldsymbol{1}_{(dc-b(c-1),
\infty)}(z)\\
&-(z-(dc-a(c-1)))\boldsymbol{1}_{(dc-a(c-1),\infty)}(z))
)(\frac{(1-p_i)}{a-b}((z-b)\boldsymbol{1}_{(b,\infty)}(z)-(z-a)%
\boldsymbol{1}_{(a,\infty)}(z))\\
&+\frac{p_i}{(1-c)(a-b)}((z-(dc-b(c-1)))\boldsymbol{1}_{(dc-b(c-1),
\infty)}(z)-(z-(dc-a(c-1)))\boldsymbol{1}_{(dc-a(c-1),\infty)}(z))\\
&=\frac{2(1-p_i)}{a-b}\beta(z,b,a)+\frac{2p_i}{(1-c)(a-b)}\beta(z,dc-b(c-1),%
dc-a(c-1))-(\frac{(1-p_{i|j})}{a-b}\beta(z,b,a)\\
&+\frac{p_{i|j}}{(1-c)(a-b)}\beta(z,dc-b(c-1),dc-a(c-1))
)(\frac{(1-p_i)}{a-b}\beta(z,b,a) \\
&+\frac{p_i}{(1-c)(a-b)}\beta(z,dc-b(c-1),dc-a(c-1))
\end{align*}
where,
\[
p_i\,=\,P(Y_i=1)=p_j,\,p_{i|j}=P(Y_i=1|Y_j=1),\,
\beta(z,\theta,\alpha)\,=\,((z-\theta)\boldsymbol{1}_{(\theta,
\infty)}(z)-(z-\alpha)\boldsymbol{1}_{(\alpha,\infty)}(z)),\,\theta\geq
\alpha.
\]
Notice that
\begin{align*}
\frac{d}{dz}[\beta(z,\theta,\alpha)]&=(z-\theta)\delta(z-\theta)+
\boldsymbol{1}_{(\theta,\infty)}(z)-(z-\alpha)\delta(z-\alpha)-
\boldsymbol{1}_{(\alpha,\infty)}(z)\\
&=(z-\theta)\delta(z-\theta)-(z-\alpha)\delta(z-\alpha)+
\boldsymbol{1}_{(\theta,\alpha)}(z)=\boldsymbol{1}_{(\theta,\alpha)}(z)
\end{align*}
where $\delta(x)$ is the Dirac delta function, implying that
\begin{align*}
f_{Z_{ijmin}}(z)\,&=\,\frac{d}{dz}[F_{Z_{ijmin}}(z)]\\
&=\frac{2(1-p_i)}{a-b}\boldsymbol{1}_{(a,b)}(z)+\frac{2p_i}{(1-c)(a-b)}\boldsymbol{1}_{(dc-b(c-1),dc-a(c-1))}(z)\\
&-(\frac{(1-p_{i|j})}{a-b}\beta(z,b,a)+\frac{p_{i|j}}{(1-c)(a-b)}\beta(z,dc-b(c-1),dc-a(c-1)))(\frac{(1-p_i)}{a-b}\boldsymbol{1}_{(a,b)}(z)\\
&+\frac{p_i}{(1-c)(a-b)}\boldsymbol{1}_{(dc-b(c-1),dc-a(c-1))}(z))\\
&-(\frac{(1-p_{i|j})}{a-b}\boldsymbol{1}_{(a,b)}(z)+\frac{p_{i|j}}{(1-c)(a-b)}
\boldsymbol{1}_{(dc-b(c-1),dc-a(c-1))}(z))(\frac{(1-p_i)}{a-b}\beta(z,b,a)\\
&+\frac{p_i}{(1-c)(a-b)}\beta(z,dc-b(c-1),dc-a(c-1))\\
&=\frac{2(1-p_i)}{a-b}\boldsymbol{1}_{(a,b)}(z)+\frac{2p_i}{(1-c)(a-b)}\boldsymbol{1}_{(dc-b(c-1),dc-a(c-1))}(z)\\
&-\frac{(1-p_{i|j})}{a-b}\beta(z,b,a)\frac{(1-p_i)}{a-b}\boldsymbol{1}_{(a,
b)}(z)-\frac{(1-p_{i|j})}{a-b}\beta(z,b,a)\frac{p_i}{(1-c)(a-b)}%
\boldsymbol{1}_{(dc-b(c-1),dc-a(c-1))}(z)\\
&-\frac{p_{i|j}}{(1-c)(a-b)}\beta(z,dc-b(c-1),dc-a(c-1))\frac{(1-p_i)}{a-b}
\boldsymbol{1}_{(a,b)}(z)\\
&-\frac{p_{i|j}}{(1-c)(a-b)}\beta(z,dc-b(c-1),dc-a(c-1))\frac{p_i}{(1-c)(a-b)}
\boldsymbol{1}_{(dc-b(c-1),dc-a(c-1))}(z)\\
&-\frac{(1-p_{i|j})}{a-b}\boldsymbol{1}_{(a,b)}(z)\frac{(1-p_i)}{a-b}\beta(z,b,
a)-\frac{(1-p_{i|j})}{a-b}\boldsymbol{1}_{(a,b)}(z)\frac{p_i}{(1-c)(a-b)}
\beta(z,dc-b(c-1),dc-a(c-1))\\
&-\frac{p_{i|j}}{(1-c)(a-b)}\boldsymbol{1}_{(dc-b(c-1),dc-a(c-1))}(z)
\frac{(1-p_i)}{a-b}\beta(z,b,a)\\
&-\frac{p_{i|j}}{(1-c)(a-b)}\boldsymbol{1}_{(dc-b(c-1),dc-a(c-1))}(z)
\frac{p_i}{(1-c)(a-b)}\beta(z,dc-b(c-1),dc-a(c-1))\\
&=\frac{2(1-p_i)}{a-b}\boldsymbol{1}_{(a,b)}(z)+\frac{2p_i}{(1-c)(a-b)}
\boldsymbol{1}_{(dc-b(c-1),dc-a(c-1))}(z)\\
&-\frac{(1-p_{i|j})}{a-b}(a-b)\frac{(1-p_i)}{a-b}\boldsymbol{1}_{(a,b)}(z)-
\frac{(1-p_i)}{a-b}(a-z)\frac{p_i}{(1-c)(a-b)}\boldsymbol{1}_{(dc-b(c-1),dc-a(c-1))}(z)\\
&-\frac{p_{i|j}}{(1-c)(a-b)}(dc-a(c-1)-z)\frac{(1-p_i)}{a-b}\boldsymbol{1}_{(a,b)}(z)\\
&-\frac{p_{i|j}}{(1-c)(a-b)}(c-1)(b-a)\frac{p_i}{(1-c)(a-b)}
\boldsymbol{1}_{(dc-b(c-1),dc-a(c-1))}(z)\\
&-\frac{(1-p_{i|j})}{a-b}\boldsymbol{1}_{(a,b)}(z)\frac{(1-p_i)}{a-b}(a-b)-
\frac{(1-p_{i|j})}{a-b}\boldsymbol{1}_{(a,b)}(z)
\frac{p_i}{(1-c)(a-b)}(dc-a(c-1)-z)\\
&-\frac{p_{i|j}}{(1-c)(a-b)}\boldsymbol{1}_{(dc-b(c-1),dc-a(c-1))}(z)\frac{(1-p_i)}{a-b}(a-z)\\
&-\frac{p_{i|j}}{(1-c)(a-b)}\boldsymbol{1}_{(dc-b(c-1),dc-a(c-1))}(z)\frac{p_i}{(1-c)(a-b)}(c-1)(b-a)\\
&=\frac{2(1-p_i)}{a-b}\boldsymbol{1}_{(a,b)}(z)+\frac{2p_i)}{(1-c)(a-b)}\boldsymbol{1}_{(dc-b(c-1),dc-a(c-1))}(z)\\
&-\frac{(1-p_{i|j})}{a-b}\frac{(1-p_i)}{a-b}(a-b)\boldsymbol{1}_{(a,b)}(z)-\frac{(1-p_{i|j})}{a-b}\frac{p_i}{(1-c)(a-b)}(a-z)\boldsymbol{1}_{(dc-b(c-1),dc-a(c-1))}(z)\\
&-\frac{p_{i|j}}{(1-c)(a-b)}\frac{(1-p_i)}{a-b}(dc-a(c-1)-z)\boldsymbol{1}_{(a,b)}(z)\\
&-\frac{p_{i|j}}{(1-c)(a-b)}(c-1)(b-a)\frac{p_i}{(1-c)(a-b)}%
\boldsymbol{1}_{(dc-b(c-1),dc-a(c-1))}(z)\\
&-\frac{(1-p_{i|j})}{a-b}\frac{(1-p_i)}{a-b}(a-b)\boldsymbol{1}_{(a,b)}(z)-
\frac{(1-p_{i|j})}{a-b}\frac{p_i}{(1-c)(a-b)}(dc-a(c-1)-z)\boldsymbol{1}_{(a,
b)}(z)\\
&-\frac{p_{i|j}}{(1-c)(a-b)}\frac{(1-p_i)}{a-b}(a-z)\boldsymbol{1}_{(dc-b(c-1),
dc-a(c-1))}(z)\\
&-\frac{p_{i|j}}{(1-c)(a-b)}\frac{p_i}{(1-c)(a-b)}(c-1)(b-a)%
\boldsymbol{1}_{(dc-b(c-1),dc-a(c-1))}(z)
\end{align*}

From this we arrive at 
\begin{align*}
EZ_{ijmin}^2&=\int_{-\infty}^\infty x^2f_{Z_{ijmin}}(x)dx\\
&=\frac{2(1-p_i)}{a-b}\frac{x^3}{3}|^b_a+\frac{2p_i}{(1-c)(a-b)}
\frac{x^3}{3}|^{dc-a(c-1)}_{dc-b(c-1)}\\
&-\frac{(1-p_{i|j})}{a-b}\frac{(1-p_i)}{a-b}(a-b)\frac{x^3}{3}|^b_a-
\frac{(1-p_{i|j})}{a-b}\frac{p_i}{(1-c)(a-b)}(a
\frac{x^3}{3}|^{dc-a(c-1)}_{dc-b(c-1)}-
\frac{x^4}{4}|^{dc-a(c-1)}_{dc-b(c-1)})\\
&-\frac{p_{i|j}}{(1-c)(a-b)}\frac{(1-p_i)}{a-b}((dc-a(c-1))\frac{x^3}{3}|^b_a-
\frac{x^4}{4}|^b_a)\\
&-\frac{p_{i|j}}{(1-c)(a-b)}(c-1)(b-a)\frac{p_i}{(1-c)(a-b)}
\frac{x^3}{3}|^{dc-a(c-1)}_{dc-b(c-1)}\\
&-\frac{(1-p_{i|j})}{a-b}\frac{(1-p_i)}{a-b}(a-b)\frac{x^3}{3}|^b_a-
\frac{(1-p_{i|j})}{a-b}\frac{p_i}{(1-c)(a-b)}((dc-a(c-1))\frac{x^3}{3}|^b_a-
\frac{x^4}{4}|^b_a)\\
&-\frac{p_{i|j}}{(1-c)(a-b)}\frac{(1-p_i)}{a-b}(a
\frac{x^3}{3}|^{dc-a(c-1)}_{dc-b(c-1)}-
\frac{x^4}{4}|^{dc-a(c-1)}_{dc-b(c-1)})\\
&-\frac{p_{i|j}}{(1-c)(a-b)}\frac{p_i}{(1-c)(a-b)}(c-1)(b-a)
\frac{x^3}{3}|^{dc-a(c-1)}_{dc-b(c-1)}
\end{align*}
and
\begin{align*}
EZ_{ijmin}&=\int_{-\infty}^\infty xf_{Z_{ijmin}}(x)dx
=\frac{2(1-p_i)}{a-b}\frac{x^2}{2}|^b_a+\frac{2p_i}{(1-c)(a-b)}
\frac{x^2}{2}|^{dc-a(c-1)}_{dc-b(c-1)}\\
&-\frac{(1-p_{i|j})}{a-b}\frac{(1-p_i)}{a-b}(a-b)\frac{x^2}{2}|^b_a-
\frac{(1-p_{i|j})}{a-b}\frac{p_i}{(1-c)(a-b)}(a
\frac{x^2}{2}|^{dc-a(c-1)}_{dc-b(c-1)}\\
&-\frac{x^3}{3}|^{dc-a(c-1)}_{dc-b(c-1)})\\
&-\frac{p_{i|j}}{(1-c)(a-b)}\frac{(1-p_i)}{a-b}((dc-a(c-1))\frac{x^2}{2}|^b_a-
\frac{x^3}{3}|^b_a)\\
&-\frac{p_{i|j}}{(1-c)(a-b)}(c-1)(b-a)\frac{p_i}{(1-c)(a-b)}
\frac{x^2}{2}|^{dc-a(c-1)}_{dc-b(c-1)}\\
&-\frac{(1-p_{i|j})}{a-b}\frac{(1-p_i)}{a-b}(a-b)\frac{x^2}{2}|^b_a-
\frac{(1-p_{i|j})}{a-b}\frac{p_i}{(1-c)(a-b)}((dc-a(c-1))\frac{x^2}{2}|^b_a-
\frac{x^3}{3}|^b_a)\\
&-\frac{p_{i|j}}{(1-c)(a-b)}\frac{(1-p_i)}{a-b}(a
\frac{x^2}{2}|^{dc-a(c-1)}_{dc-b(c-1)}-
\frac{x^3}{3}|^{dc-a(c-1)}_{dc-b(c-1)})\\
&-\frac{p_{i|j}}{(1-c)(a-b)}\frac{p_i}{(1-c)(a-b)}(c-1)(b-a)
\frac{x^2}{2}|^{dc-a(c-1)}_{dc-b(c-1)}
\end{align*}
\end{enumerate}

\end{proof}

Next we prove \Cref{theorem:Theorem 1}.
\begin{proof}
Recall that
\begin{equation*}
Q_{t+1}^i(s,a)\,=Q_t^\pi(s,a)+(\varepsilon^i_t(s,a)+\alpha I_i[Y^{MQ}-Q_t^\pi(s,a)-\varepsilon^i_t(s,a)]
\end{equation*}

where $\varepsilon^i_t(s,a)\,$identically distributed
$U(-\tau$,$\tau),\,I_i\sim$Bernoulli$(p_i)\,$ .
Hence,
\begin{multline*}
VarQ^{avg}=Var(\frac{1}{M}\sum_{i\in
\cal{M}}Q_{t+1}^i(s,a)\,|Y^{MQ})\,=\frac{1}{M^2}Var(\sum_{i\in
\cal{M}}Q_{t+1}^i(s,a)\,|Y^{MQ})\\
=\frac{1}{M^2}[\sum_{i\in \cal{M}}Var(Q_{t+1}^i(s,a)\,|Y^{MQ})+\sum_{i\neq
j}Cov(Q_{t+1}^i(s,a)\,|Y^{MQ},Q_{t+1}^j(s,a)\,|Y^{MQ})]\\
=\frac{1}{M^2}[\sum_{i\in \cal{M}}Var(Q_{t+1}^i(s,a)\,|Y^{MQ})+\sum_{i\neq
j}(E[Q_{t+1}^i(s,a)\,|Y^{MQ}]E[Q_{t+1}^j(s,a)\,|Y^{MQ}]-E[Q_{t+1}^i(s,a)\,
|Y^{MQ}]E[Q_{t+1}^j(s,a)\,|Y^{MQ}])]
\end{multline*}
Consider $|\cal{M}|=2$. Then
\begin{multline*}
VarQ^{avg}=\frac{1}{4}[Var(Q_{t+1}^i(s,a)\,|Y^{MQ})+Var(Q_{t+1}^j(s,a)
|Y^{MQ})+E[Q_{t+1}^i(s,a)\,|Y^{MQ}]E[Q_{t+1}^j(s,a)\,|Y^{MQ}]-E[Q_{t+1}^i(s,
a)\,|Y^{MQ}]E[Q_{t+1}^j(s,a)\,|Y^{MQ}]]\\
=\frac{1}{4}[2Var(Q_{t+1}^i(s,a)\,|Y^{MQ})+E[Q_{t+1}^i(s,a)\,
|Y^{MQ}]E[Q_{t+1}^j(s,a)\,|Y^{MQ}]-E[Q_{t+1}^i(s,a)\,|Y^{MQ}]E[Q_{t+1}^j(s,a)
\,|Y^{MQ}]]
\end{multline*}
Thus by Lemma 1,
\begin{align*}
VarQ^{avg}&=\frac{1}{4}[2Var(Q_{t+1}^i(s,a)\,|Y^{MQ})+E[Q_{t+1}^i(s,a)\,
|Y^{MQ}]E[Q_{t+1}^j(s,a)\,|Y^{MQ}]-E[Q_{t+1}^i(s,a)\,|Y^{MQ}]E[Q_{t+1}^j(s,a)
\,|Y^{MQ}]]\\
&=\frac{1}{4}[2Var(Q_{t+1}^i(s,a)\,|Y^{MQ})+(\alpha(Y^{MQ}-Q_t^\pi(s,
a)))^2(p_ip_{j|i}-p_ip_j)]\\
&=\frac{1}{4}[2Var(Q_{t+1}^i(s,a)\,|Y^{MQ})+(\alpha(Y^{MQ}-Q_t^\pi(s,
a)))^2(p_{ij}-p_i^2)]\\
&=\psi+\varphi(p_ip_{j|i}-p_i^2)
\end{align*}
where $\psi=\,\frac{1}{2}Var(Q_{t+1}^i(s,a)\,|Y^{MQ}),\varphi=\frac{1}{4}
\alpha$($Y^{MQ}-Q_t^\pi(s,a)$)$^2$\\
Notice that $\psi,\varphi\geq 0.\,$
So $VarQ^{avg}$ breaks down to
\begin{equation*}
VarQ^{avg}=
\begin{cases}
&\psi\text{, if none of }Q_t^i(s,a),Q_{t+1}^i(s,a)\text{ were
updated}\,\text{or if they were updated by random sampling }\\
&\psi+\varphi(p_{ij}-p_i^2)\text{, if both }Q_t^i(s,a)\text{ and
}Q_{t+1}^i(s,a)\text{ }\,\text{were updated}\,\text{according to }k\text{-DPP}
\end{cases}
\end{equation*}
Since $k$-DPP is a repulsive process,$\,$if $Q_t^i(s,a),Q_{t+1}^i(s,a)$ \
$\,$are close $p_{ij}<p_ip_j=p^2_i$, and$\,\,$as they get further apart,
because of our choice of kernel based on CKA,
$p_{ij}\rightarrow p_ip_j$ and $p_i\rightarrow\frac{1}{N}\,$ such
that even when points are farther apart, variance is still reduced. Proof for general $M$
follows by induction$\,$and similar process can be followed to prove case for
$VarQ^{min}$.

\end{proof}

\end{document}